%% file: iclr2026_conference.tex
\documentclass{article} % For LaTeX2e
\usepackage{iclr2026_conference,times}

\input{math_commands.tex}

\usepackage{hyperref}
\usepackage{url}
\usepackage{kotex}
\usepackage{graphicx}      % \includegraphics
\usepackage{subcaption}    % \begin{subfigure}
\usepackage{caption}       % \captionsetup
\usepackage{booktabs}   % for toprule, midrule, bottomrule
\usepackage{multirow}   % for multirow cells
\usepackage[table]{xcolor} % for cellcolor
\usepackage{siunitx}    % for S columns, \sisetup, \tablenum
\usepackage{graphicx}   % for resizebox
\usepackage{wrapfig}
\usepackage{amsthm}
\newtheorem{proposition}{Proposition}

\title{Relational Feature Caching for Accelerating Diffusion Transformers}

\author{
Byunggwan Son$^{1}$\thanks{Equal contribution.} \quad
Jeimin Jeon$^{1*}$  \quad
Jeongwoo Choi$^{1*}$ \quad
Bumsub Ham$^{1,2}$\thanks{Correspondence author.} \vspace{.25em}\\
$^1$Yonsei University \\ $^2$Korea Institute of Science and Technology (KIST) \vspace{.25em}\\
\texttt{\{byunggwan.son, jeimin, jeongwoo.choi, bumsub.ham\}@yonsei.ac.kr}
}

\usepackage{xspace}

\def\onedot{.\xspace}
\def\eg{\emph{e.g}\onedot} 
\def\ie{\emph{i.e}\onedot}

\def\wrt{w.r.t\onedot}

\iclrfinalcopy % Uncomment for camera-ready version, but NOT for submission.
\begin{document}

\maketitle

\begin{abstract}
Feature caching approaches accelerate diffusion transformers (DiTs) by storing the output features of computationally expensive modules at certain timesteps, and exploiting them for subsequent steps to reduce redundant computations. Recent forecasting-based caching approaches employ temporal extrapolation techniques to approximate the output features with cached ones. Although effective, relying exclusively on temporal extrapolation still suffers from significant prediction errors, leading to performance degradation. Through a detailed analysis, we find that 1) these errors stem from the irregular magnitude of changes in the output features, and 2) an input feature of a module is strongly correlated with the corresponding output. Based on this, we propose relational feature caching (RFC), a novel framework that leverages the input-output relationship to enhance the accuracy of the feature prediction. Specifically, we introduce relational feature estimation (RFE) to estimate the magnitude of changes in the output features from the inputs, enabling more accurate feature predictions. We also present relational cache scheduling (RCS), which estimates the prediction errors using the input features and performs full computations only when the errors are expected to be substantial. Extensive experiments across various DiT models demonstrate that RFC consistently outperforms prior approaches significantly. Project page is available at https://cvlab.yonsei.ac.kr/projects/RFC
\end{abstract}

\input{sections/1_intro}
\input{sections/2_related}

\input{sections/3_methods_rev2.tex}

\input{sections/4_experiments}

\bibliography{iclr2026_conference}
\bibliographystyle{iclr2026_conference}

\newpage

\appendix
\input{sections/5_appendix}

% \appendix
% \section{Appendix}
% You may include other additional sections here.

\end{document}

%% file: math_commands.tex
\usepackage{amsmath,amsfonts,bm}

\def\eqref#1{equation~\ref{#1}}
\def\1{\bm{1}}

\DeclareMathAlphabet{\mathsfit}{\encodingdefault}{\sfdefault}{m}{sl}
\SetMathAlphabet{\mathsfit}{bold}{\encodingdefault}{\sfdefault}{bx}{n}

%% file: sections/1_intro.tex
\section{Introduction}

Diffusion models~\citep{ho2020denoising, song2019generative} have recently achieved remarkable success in generative tasks, such as text-to-image generation~\citep{rombach2022high, saharia2022photorealistic}, and video generation~\citep{blattmann2023stable, blattmann2023align}. While early approaches~\citep{ho2020denoising, dhariwal2021diffusion} relied on U-Net architectures~\citep{ronneberger2015u}, recent works have shifted toward diffusion transformers (DiTs)~\citep{peebles2023scalable}, which demonstrate superior performance, particularly when model and data scales increase. However, such gains come with significant computational costs, since DiTs require performing expensive forward passes over numerous denoising timesteps, which hinders their practical application. To address this, feature caching approaches~\citep{ma2024deepcache, wimbauer2024cache, liu2025reusing} have emerged, offering a promising solution for reducing redundant computation in DiTs. These methods are based on the observation that the intermediate features in DiTs are highly similar across adjacent timesteps. Specifically, they perform full computations at certain timesteps, cache the output features of computationally expensive modules~(\eg, attention and MLP), and exploit the cached features at subsequent timesteps to reduce redundant computations.

Early caching approaches~\citep{ma2024deepcache,selvaraju2024fora} typically reuse cached features directly without adaptation. While this improves efficiency, the discrepancy between cached and fully computed features accumulates over timesteps, which in turn degrades the generation quality. To alleviate this, recent works have proposed forecasting-based methods, that predict features through temporal extrapolation techniques, with an assumption that features evolve smoothly over timesteps. For instance, FasterCache~\citep{lv2024fastercache} and GOC~\citep{qiu2025accelerating} perform a linear extrapolation technique based on features from the two most recent full-compute steps, while TaylorSeer~\citep{liu2025reusing} leverages the Taylor expansion to estimate feature changes along timesteps. However, our observations reveal that the magnitude of changes in the output features differs significantly across timesteps~(Fig.~\ref{fig:teaser}(a)), leading to substantial prediction errors~(Fig.~\ref{fig:teaser}(c)). Specifically, TaylorSeer achieves slight reductions in prediction error compared to FORA~\citep{selvaraju2024fora}, which reuses features without adaptation. This degrades the generation quality severely, especially when employing large temporal intervals between full computations~(\ie,~lower FLOPs in Fig.~\ref{fig:teaser}(d)).

To address this limitation, we propose relational feature caching (RFC), a simple yet effective framework that leverages the relationship between the input and output features of modules~(\eg, attention and MLP), rather than relying solely on temporal extrapolation techniques. To this end, we propose relational feature estimation~(RFE), which estimates the magnitude of changes in the output feature by exploiting the differences in input features. RFE is based on the observation that the magnitude of changes in the output features is highly correlated with that of the input features~(Figs.~\ref{fig:teaser}(a-b)), enhancing the accuracy of the feature estimation. We also introduce relational cache scheduling~(RCS), a dynamic caching strategy that determines when to perform full computations based on estimated output errors. Since directly measuring output errors requires full computations, we instead employ errors in the input prediction as an efficient proxy, leveraging the relationship between the input and output features. Extensive experiments show that our proposed RFC consistently outperforms previous caching methods across a variety of DiT models, demonstrating the effectiveness of our method. We summarize our contributions as follows:

\begin{itemize}
    \item We propose RFE, a forecasting method that leverages input feature variations to more accurately estimate output features.
    \item We propose RCS, a dynamic strategy that performs full computations adaptively by estimating the prediction error of the output features from that of input features.
    \item Extensive experiments across various DiT models demonstrate that using RFE and RCS consistently outperforms existing caching methods in terms of both the generation quality and computational efficiency.
\end{itemize}

\begin{figure}[t]
   \centering
   \captionsetup{font={small}}
   % (a) Base Heatmap
   \hfill
   \begin{subfigure}[t]{0.24\linewidth}
       \centering
       \includegraphics[width=\textwidth]{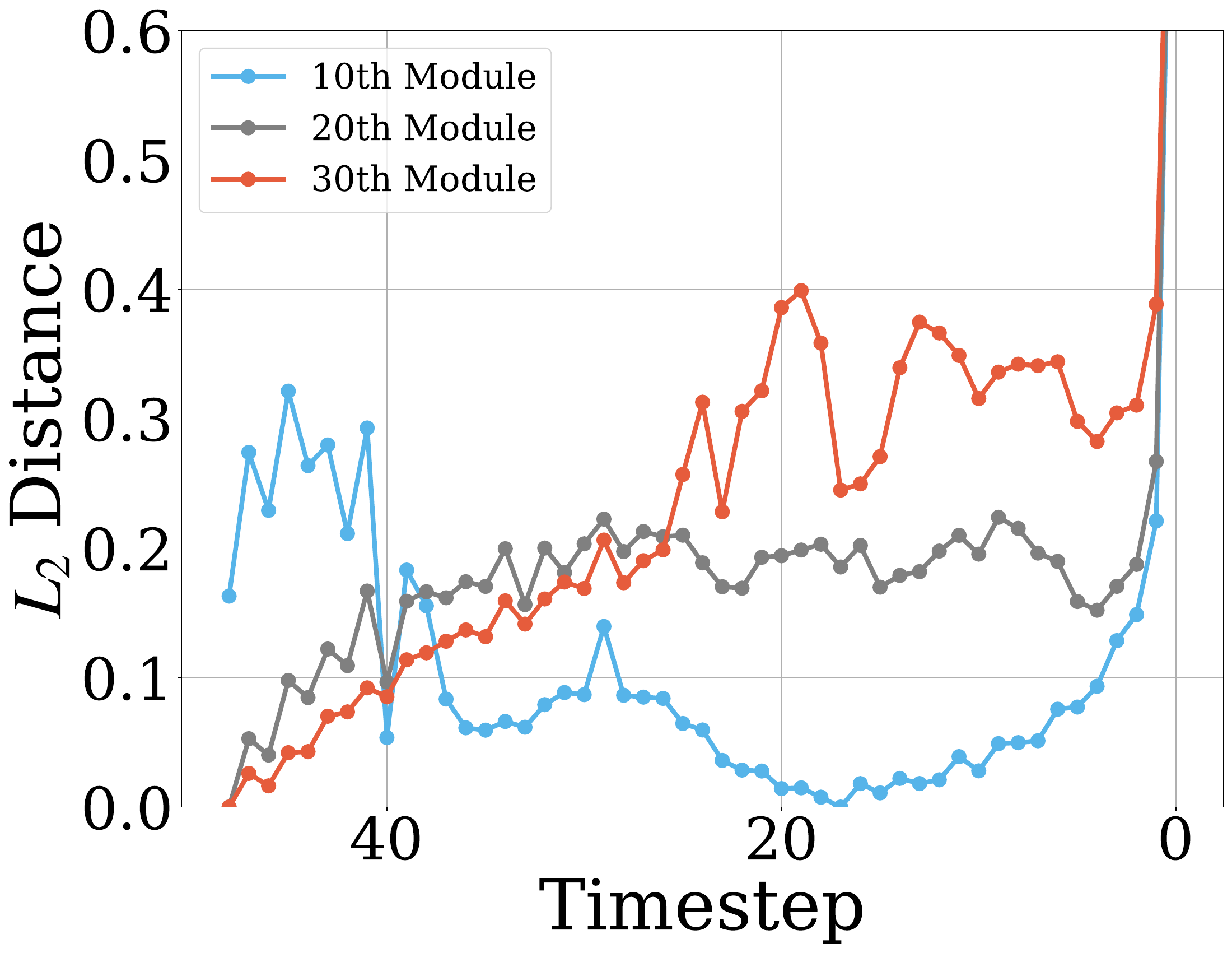}
       \caption{Output difference.}
   \end{subfigure}
   \hfill
   \begin{subfigure}[t]{0.225\linewidth}
       \centering
       \includegraphics[width=\textwidth]{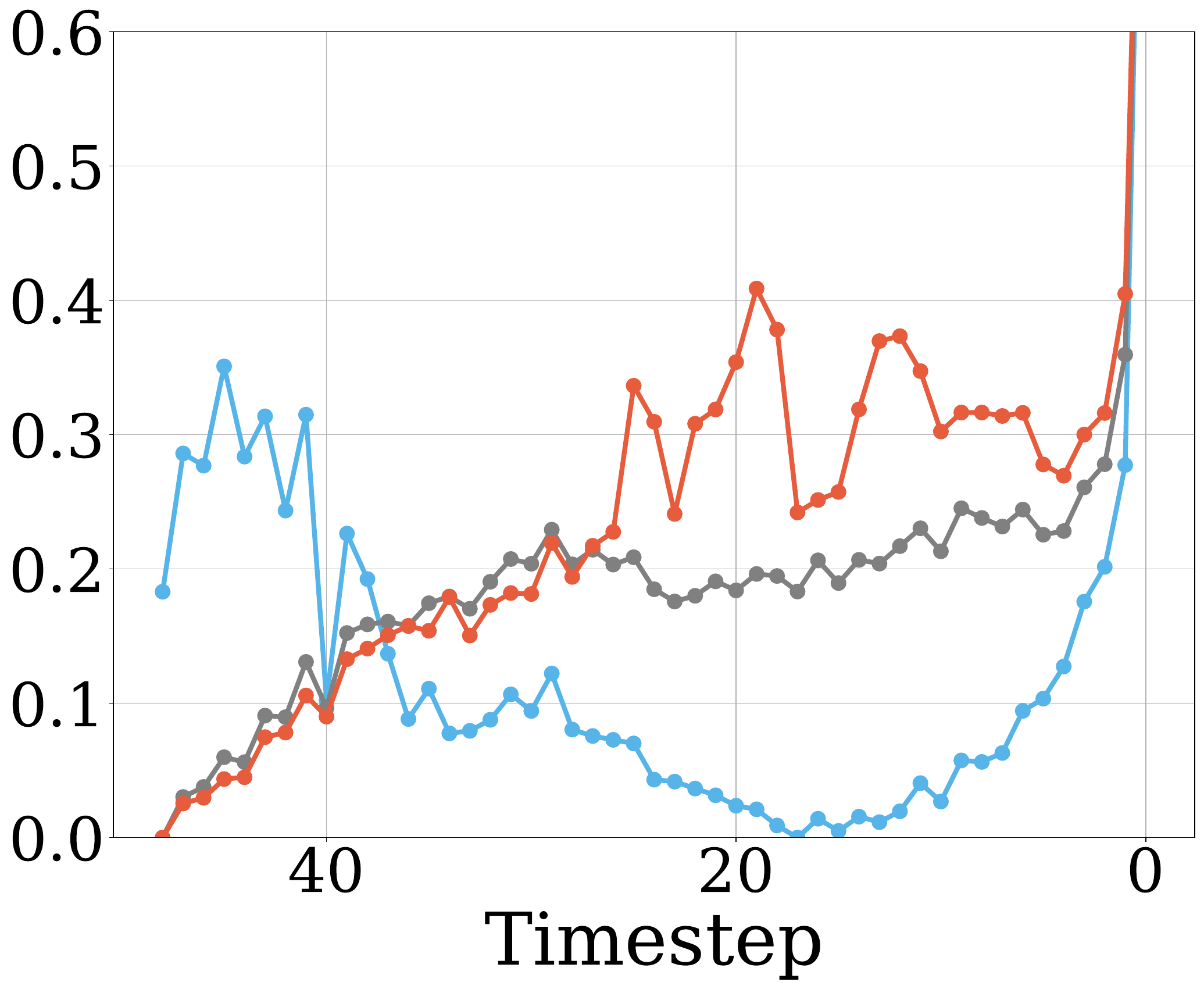}
       \caption{Input difference.}
   \end{subfigure}
   \hfill
   \begin{subfigure}[t]{0.24\linewidth}
       \centering
       \includegraphics[width=\textwidth]{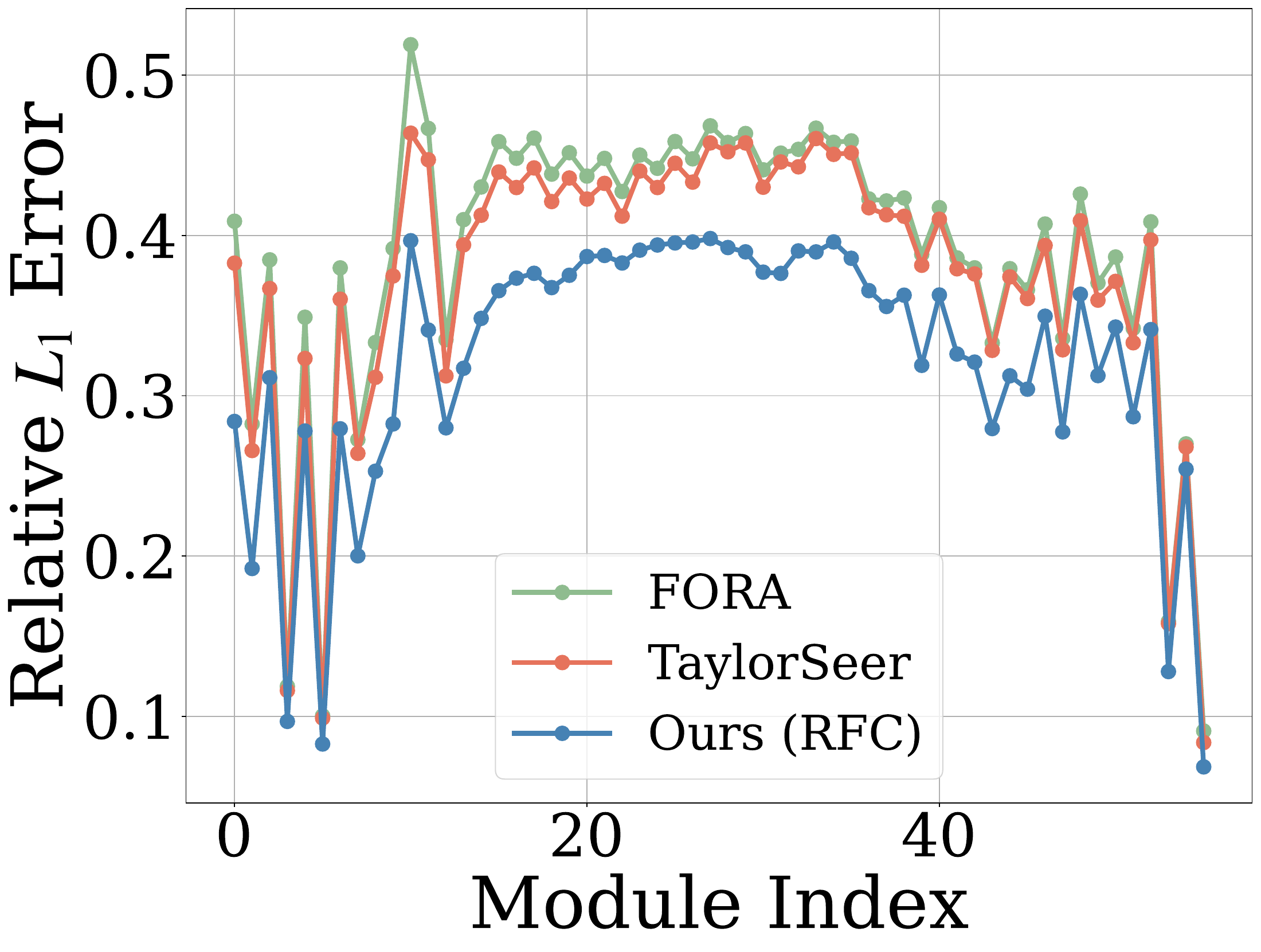}
       \caption{Prediction error.}
   \end{subfigure}
   \hfill
   % (b) Ours Heatmap
   \begin{subfigure}[t]{0.24\linewidth}
       \centering
       \includegraphics[width=\textwidth]{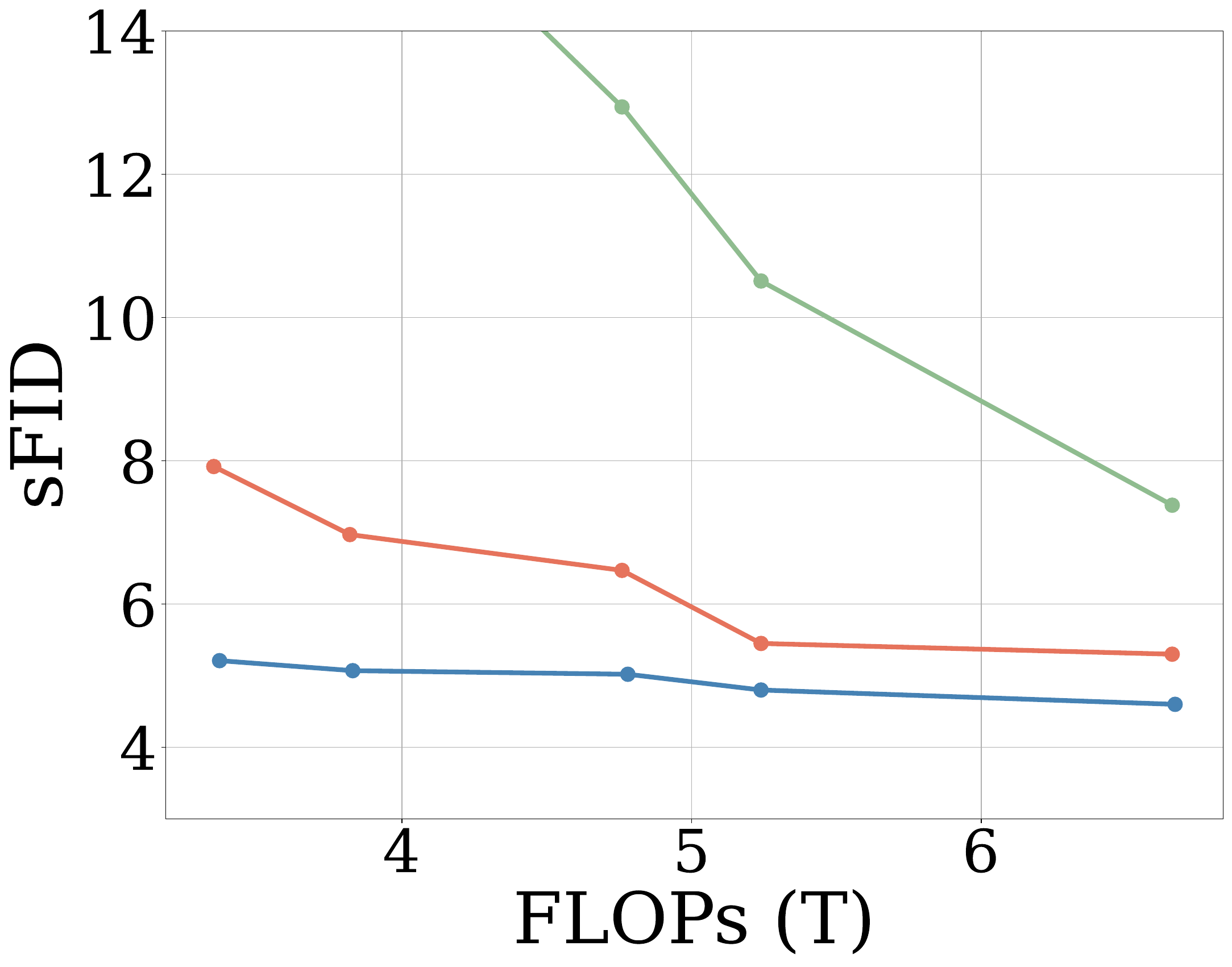}
       \caption{Performance.}
   \end{subfigure}
   \hfill
   % \vspace{-0.2cm}
   \caption{
Feature analysis and comparison between existing approaches~(FORA~\citep{selvaraju2024fora}, TaylorSeer~\citep{liu2025reusing}), and our method~(RFC) using DiT-XL/2~\citep{peebles2023scalable}. (a-b) Min-max normalized $L_2$ distances of output and input features, measured between consecutive timesteps. While the variations of feature changes are irregular, those of input and output remain closely aligned with each other. (c) The prediction errors across different modules. We measure the relative $L_1$ error between output features with and without applying caching methods and average the values over the timesteps. (d) Quantitative results on ImageNet~\citep{deng2009imagenet} evaluated in terms of FLOPs and sFID~\citep{nash2021generating}.
}

   % \vspace{-0.4cm}

   \label{fig:teaser}
\end{figure}

%% file: sections/2_related.tex
\section{Related Work}
In this section, we review representative works related to accelerating diffusion models.

\subsection{Timestep Reduction}
The iterative generation process of diffusion models~\citep{ho2020denoising, song2019generative} leads to a slow inference and high computational costs, as each sample requires hundreds or even thousands of denoising steps. To mitigate this, many works attempt to reduce the number of timesteps. DDIM~\citep{song2020denoising} introduces a non-Markovian deterministic sampler that preserves the DDPM~\citep{ho2020denoising} training objective, while skipping intermediate timesteps. DPM-Solver~\citep{lu2022dpm} extends this idea with higher-order ODE solvers, approximating nonlinear terms through the Taylor expansion. While these approaches achieve a faster sampling, their performance degrades significantly when the number of steps becomes extremely small.

Another group of methods distill a pretrained diffusion model into a student model that generates comparable results with only a few denoising steps. Progressive distillation~\citep{salimans2022progressive} iteratively trains a student model, which then becomes the teacher for the next stage with fewer steps. Consistency models~\citep{song2023consistency} learn direct mappings from noisy images at arbitrary timesteps to noise-free images, enabling skipping intermediate denoising steps. Although these approaches generate high-quality images with limited timesteps, they typically incur a significant training overhead.

\subsection{Feature Caching}
{Feature caching methods aim to reduce the computational cost across timesteps by storing fully computed features at certain timesteps and exploiting them for subsequent steps to avoid redundant computations.} Early approaches~\citep{ma2024deepcache,selvaraju2024fora,so2312frdiff} use uniform intervals for full computations, but this ignores the fact that cache errors can vary significantly across timesteps. To address this, CacheMe~\citep{wimbauer2024cache} sets a schedule for the full computation before the denoising process based on the cache error, but this predetermined schedule does not consider varying cache errors across samples. To mitigate this, TeaCache~\citep{liu2025timestep} caches input features at full-compute steps and adaptively triggers full computations when the current input feature deviates significantly from the last cached feature. It however requires an additional calibration step to predict output cache errors from input differences. Furthermore, all the aforementioned caching approaches reuse the cached features directly without an adaptation, which can cause substantial cache errors, particularly under long intervals between successive full computations. The works of~\citep{zou2025accelerating,zou2024accelerating} aim to reduce cache errors of important tokens by selectively performing full computations on them, while still relying on cached features for the majority of other tokens. 

More recent works address the cache error by predicting features from cached ones rather than reusing them directly. FasterCache~\citep{lv2024fastercache} observes that the change in the output features is directionally consistent, and uses a linear extrapolation technique to predict subsequent features. GOC~\citep{qiu2025accelerating} also adopts a linear extrapolation technique, while reusing the cached features for timesteps where the extrapolation is less accurate. TaylorSeer~\citep{liu2025reusing} leverages the higher-order Taylor expansions to better capture nonlinear feature dynamics, providing more accurate predictions than the linear extrapolation and improving the generation quality.

%% file: sections/3_methods_rev2.tex
\section{Method}
In this section, we describe diffusion models and feature caching in DiTs~(Sec.~\ref{sec:pre}). We then introduce our RFC framework in detail~(Sec.~\ref{sec:ours}).

\subsection{Preliminaries}
\label{sec:pre}
\paragraph{Diffusion models.}
Diffusion models consist of a forward and a reverse process. In the forward process, Gaussian noise  $\epsilon$, sampled from a normal distribution $\mathcal{N}(0,1)$, is added to the input image $x_0$ over a sequence of timesteps. Specifically, at a given timestep $t$, the noisy image $x_t$ is defined as:
\begin{equation}
x_t = \sqrt{\alpha_t} x_0 + \sqrt{1 - \alpha_t} \epsilon, \quad \epsilon \sim \mathcal{N}(0, I),
\end{equation}
where $\alpha_t$ denotes a predefined noise schedule at timestep $t$. The reverse process recovers the original image $x_0$ from the noisy input by iteratively denoising it through a series of steps. At each timestep $t$, the model predicts the noise component, which is then used to compute the denoised image for the previous timestep $x_{t-1}$. For instance, DDIM sampler~\citep{song2020denoising} defines the reverse process as follows:
\begin{equation}
    x_{t-1} = \sqrt{\alpha_{t-1}} \left( \frac{x_t - \sqrt{1 - \alpha_t} \epsilon_\theta(x_t, t)}{\sqrt{\alpha_t}} \right) + \sqrt{1 - \alpha_{t-1}} \epsilon_\theta(x_t, t),
    \label{eq:diff_forward}
\end{equation}
where $\epsilon_\theta$ is the noise prediction model parameterized by $\theta$. While early diffusion models~\citep{ho2020denoising, dhariwal2021diffusion} adopt U-Net architectures~\citep{ronneberger2015u}, DiTs~\citep{peebles2023scalable} instead parameterize $\epsilon_\theta$ with transformers. DiTs have been shown to be effective for large-scale generation tasks, but at the cost of a substantial computational overhead.

\paragraph{TaylorSeer~\citep{liu2025reusing}.}
TaylorSeer reduces cache errors in diffusion transformers by predicting output features through the Taylor expansion around the last fully computed timestep $t$. Given a cache interval $N$, the $m$-th order prediction of the output feature for the $l$-th module at timestep $t-k$ ($0 < k < N$) is computed as follows:
\begin{equation}
O_{\text{Taylor}}^l(t-k) = O^l(t) + \sum_{i=1}^{m} \frac{k^i}{i!} \frac{\Delta_N^i O^l(t)}{N^i},
\label{eq:taylorseer}
\end{equation}
where $O^l(t)$ is the output feature of the $l$-th module at timestep $t$, and $\Delta_N^i$ denotes the $i$-th order finite difference operator with the stride $N$ defined recursively as:
\begin{equation}
\Delta_N^i O^l(t) =
\begin{cases}
O^l(t) - O^l(t+N), & i=1, \\
\Delta_N^{i-1} O^l(t) - \Delta_N^{i-1} O^l(t+N), & i > 1.
\end{cases}
\label{eq:delta}
\end{equation}
For the clarity and convenience of notation, we omit the superscript $i$ for first-order differences and the module index $l$ in the remainder of this paper.

\begin{figure}[t]
   \centering
   \captionsetup{font={small}}
   % \hfill
   \begin{subfigure}[t]{0.33\linewidth}
       \centering
       \includegraphics[width=\textwidth]{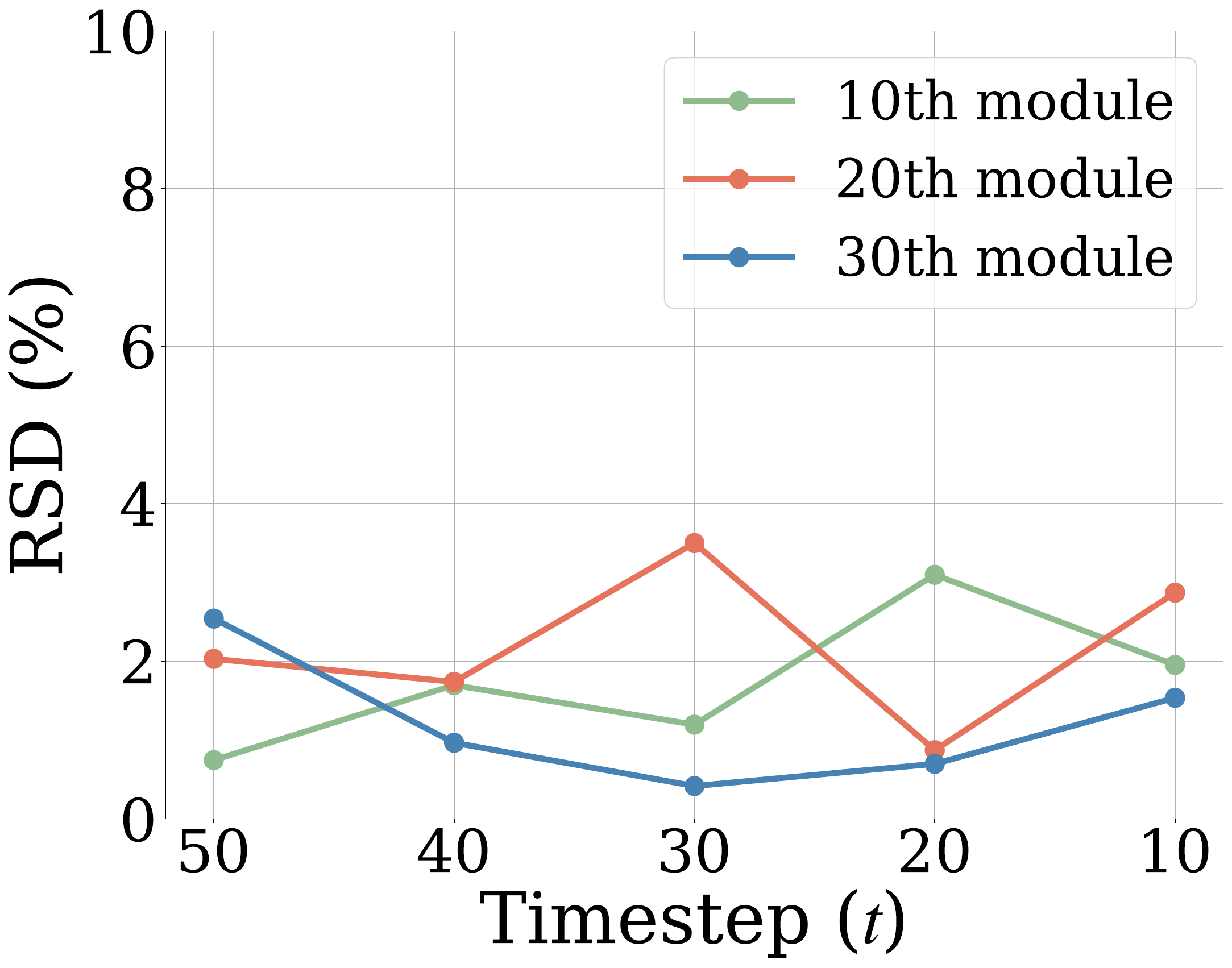}
       \caption{Variation of $s_k$.}
   \end{subfigure}
   \hspace{0.06\linewidth} % 적당한 간격
   \begin{subfigure}[t]{0.34\linewidth}
       \centering
       \includegraphics[width=\textwidth]{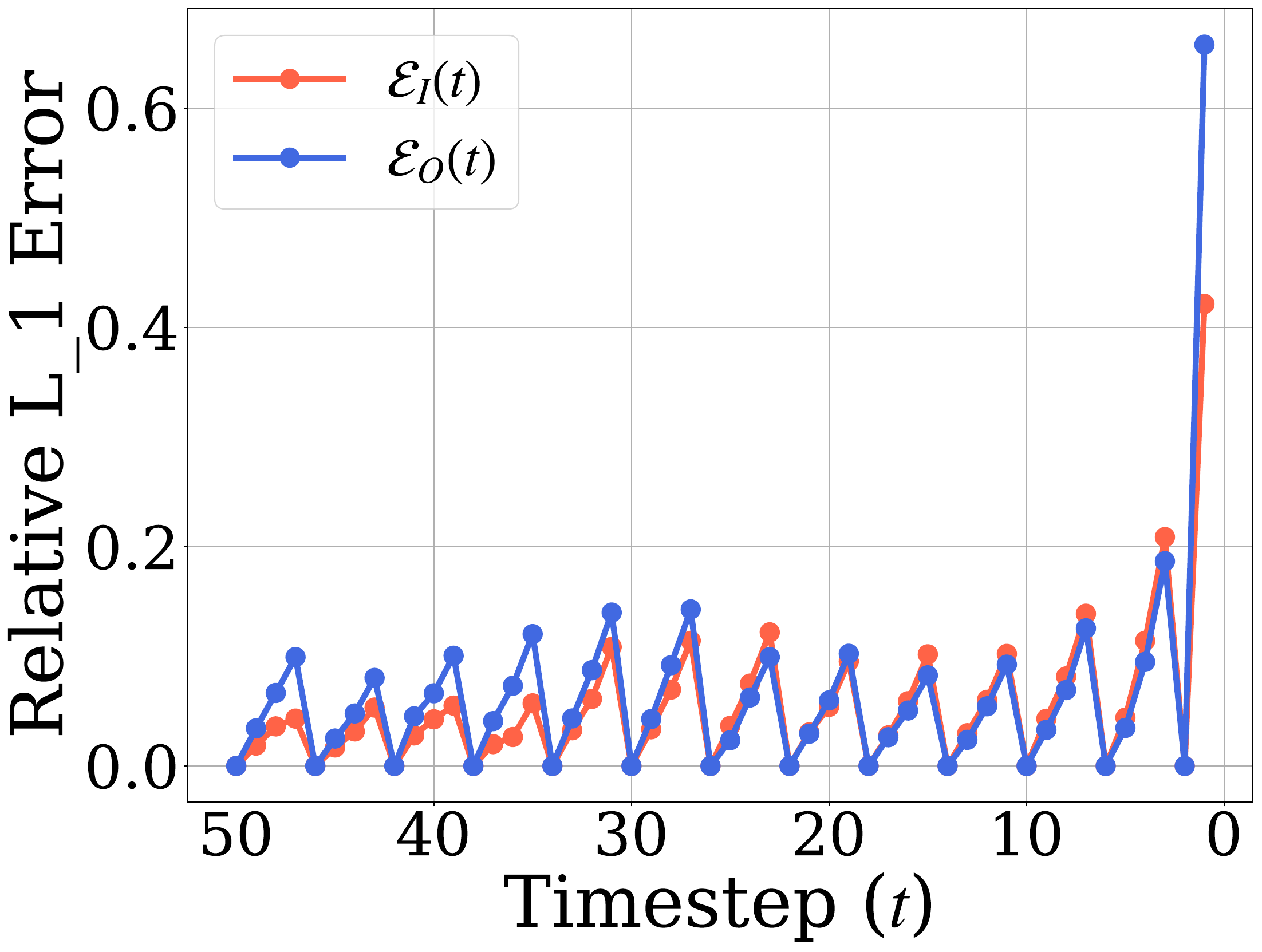}
       \caption{Relation between $\mathcal{E}_O$ and $\mathcal{E}_I$.}
   \end{subfigure}
   % \hfill

   % \vspace{-0.2cm}
   \caption{Empirical analyses in DiT-XL/2~\citep{peebles2023scalable}. (a) RSD of $s_k(t-k)$ with varying $t$. (b) Relative $L_1$ errors of the output and input features,~\ie, $\mathcal{E}_O(t)$ and $\mathcal{E}_I(t)$, respectively. Please see the text for details.}
   \vspace{-0.05cm}

   \label{fig:analysis}
\end{figure}

\subsection{Relational Feature Caching}
\label{sec:ours}

While TaylorSeer~\citep{liu2025reusing} achieves the state-of-the-art performance over previous caching approaches, it still shows considerable prediction errors~(Fig.~\ref{fig:teaser}(c)). This is because the magnitude of the feature changes is irregular across the timesteps~(Fig.~\ref{fig:teaser}(a)), making it difficult to accurately estimate the subsequent features based only on temporal extrapolation techniques. To this end, we propose RFC, a framework that enhances feature prediction by leveraging the relationship between input and output features through two complementary components, RFE and RCS. RFE estimates the magnitude of changes in the output features, while RCS identifies when predictions become unreliable and performs full computation accordingly.

% To address this, we propose RFC, a framework that improves the feature prediction by exploiting input features. \rev{We first introduce RFE, which estimates the magnitude of output feature changes based on variations in the input. Although RFE significantly enhances prediction accuracy, estimation errors are still inevitable and vary across timesteps, making a fixed caching interval suboptimal. To overcome this, we also introduce RCS, a dynamic scheduling strategy that determines when to perform a full computation based on the estimated prediction error of the input feature.}

% \rev{RFC comprises two novel components: (1) RFE, which estimates the magnitude of output feature changes based on variations in the input; and (2) RCS, which adaptively triggers full computation by monitoring the accumulated prediction errors. }

\paragraph{Relational feature estimation~(RFE).}
We have observed in Fig.~\ref{fig:teaser}(a-b) that the magnitudes of changes in the input and output features for the same module are highly correlated, suggesting that differences in input features can serve as effective predictors of the output variations. To quantify this relationship, we define the ratio between the magnitudes of changes in output and input features as follows:
\begin{equation}
    s_k(t-k) = \frac{\|\Delta_kO(t-k)\|_2}{\|\Delta_kI(t-k)\|_2},
    \label{eq:s_k}
\end{equation}
where $O(t-k)$ and $I(t-k)$ denote the output and input features of the same module, respectively, at timestep $t-k$, and $\Delta_k$ is the first-order difference with stride $k$ in Eq.~(\ref{eq:delta}). To evaluate the consistency of the ratio, we compute $s_k(t-k)$ for $k\in[1,9]$, and measure the relative standard deviation (RSD) over the resulting values. We can see from Fig.~\ref{fig:analysis}(a) that the RSD values remain consistently low (typically around 2\%), indicating that $s_k(t-k)$ is highly consistent across timesteps. To support the empirical consistency of $s_k(t-k)$, we present the following proposition:

% \rev{\todo{We also provide the following proposition to explain the consistency of $s_k(t-k)$.}

\begin{proposition}
% Assume that \( O(t) \) is a linear function of \( I(t) \), and the direction of the difference vector \( \Delta_k I(t-k) \) is constant for all \( k \). Then the ratio $s_k(t-k)$ is invariant \wrt \( k \).
Assume that the mapping from input to output features is locally linear, and the direction of the difference vector \( \Delta_k I(t-k) \) remains constant for $1 \le k \le N$, where $N$ is an interval between full computations. Then, the ratio $s_k(t-k)$ is approximately invariant \wrt \( k \).
\label{proposition}
\end{proposition}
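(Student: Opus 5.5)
We formalize local linearity as follows: on the neighborhood of inputs visited between two full computations, there are a matrix $A$ (the local Jacobian of the module) and a bias $b$ such that $O(\tau) = A I(\tau) + b + r(\tau)$ for $\tau \in [t-N, t]$. The remainder $r(\tau)$ is small relative to the input variations. The constant-direction assumption means there is a fixed unit vector $u$ and scalars $c_k \ge 0$ with $\Delta_k I(t-k) = c_k u$ for $1 \le k \le N$. Here, by Eq.~(\ref{eq:delta}), $\Delta_k I(t-k) = I(t-k) - I(t)$. The plan is to first prove exact invariance in the idealized case $r \equiv 0$, and then show the remainder only perturbs $s_k(t-k)$ by a relative amount controlled by the nonlinearity.

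\textbf{Exact case.} Taking differences with stride $k$ cancels the bias:
\[
\Delta_k O(t-k) = A\,\Delta_k I(t-k) = c_k A u .
\]
Hence
\[
s_k(t-k) = \frac{\|c_k A u\|_2}{\|c_k u\|_2} = \|Au\|_2 .
\]
This quantity does not depend on $k$ whenever $c_k > 0$. If $c_k = 0$, the ratio is undefined, and we simply exclude that degenerate case. This step is routine.

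\textbf{Approximate case.} We write
\[
\Delta_k O(t-k) = c_k A u + \big(r(t-k) - r(t)\big).
\]
The triangle inequality then gives
\[
\big| s_k(t-k) - \|Au\|_2 \big| \le \frac{\|r(t-k) - r(t)\|_2}{c_k}.
\]
To make this small uniformly in $k$, we use a first-order Taylor argument. Take $A$ to be the Jacobian at $I(t)$ and assume the module is twice differentiable with Hessian bounded by $L$ on the segment. Then
\[
\|r(t-k) - r(t)\|_2 \le \tfrac{L}{2}\,\|\Delta_k I(t-k)\|_2^2 = \tfrac{L}{2}\,c_k^2 .
\]
Consequently
\[
\big| s_k(t-k) - \|Au\|_2 \big| \le \tfrac{L}{2}\,c_k \le \tfrac{L}{2}\max_{j \le N} c_j .
\]
Equivalently, the relative deviation is at most $\tfrac{L\,c_k}{2\|Au\|_2}$. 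So all $s_k(t-k)$ with $1 \le k \le N$ lie within an $O(L \max_j c_j)$ band around the common value $\|Au\|_2$, which is the approximate invariance claimed. The same bound also tolerates a small angular deviation $\delta$ from the direction $u$, contributing an extra $\|A\|_2\,\delta$ term.

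\textbf{Main obstacle.} The difficulty is making "locally linear" precise enough to yield a bound that is uniform in $k$. The remainder must be measured relative to $c_k$, not absolutely; otherwise, small $c_k$ at $k = 1$ could blow up the ratio. The Taylor bound resolves this because the remainder scales as $c_k^2$. The bound degenerates only if $\|Au\|_2$ is near zero, meaning $u$ lies near the kernel of the Jacobian. To rule that out, we would add a mild nondegeneracy assumption $\|Au\|_2 \ge \sigma > 0$.
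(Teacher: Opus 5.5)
Your exact case is the paper's proof. The paper assumes $O(t)=AI(t)+b$ exactly, notes that differencing cancels $b$ so that $\Delta_kO(t-k)=A\,\Delta_kI(t-k)$, factors the ratio as $s_k(t-k)=\|Au_k(t-k)\|_2$, and concludes from the constant direction of $u_k$. Your second-order Taylor step, giving $\bigl|s_k(t-k)-\|Au\|_2\bigr|\le\tfrac{L}{2}c_k$, is correct and goes beyond the paper, which leaves ``approximately'' informal (it is inherited only from the assumptions), so your version makes the claim quantitative.
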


\begin{proof}
Suppose the output feature \( O(t) \) is a locally linear to the input feature \( I(t) \),~\ie,
\begin{equation}
    O(t) = A I(t) + b,
\end{equation}

where \( A \) and \( b \) denote the weight matrix and bias vector of the linear transformation, respectively. Then, the change in output features over the temporal offset \( k \) can be written as:
\begin{align}
    \Delta_k O(t-k) &= O(t-k) - O(t) \\
    &= A (I(t-k) - I(t)) \\
    &= A \Delta_k I(t-k).
\end{align}

Taking the \( L_2 \)-norm of both sides gives:
\begin{align}
    \|\Delta_k O(t-k)\|_2 &= \|A \Delta_k I(t-k)\|_2 \\
    &= \|A u_k(t-k)\|_2  \|\Delta_k I(t-k)\|_2, \label{eq:O-change}
\end{align}
where $u_k(t-k)$ is the normalized direction of the input change defined as:
\begin{equation}
    u_k(t-k) = \frac{\Delta_k I(t-k)}{\|\Delta_k I(t-k)\|_2}.
    \label{eq: u_k}
\end{equation}

By dividing both sides of Eq.~(\ref{eq:O-change}) with $\|\Delta_k I(t-k)\|_2$, we have the following:
\begin{equation}
    s_k(t-k) = \frac{\|\Delta_k O(t-k)\|_2}{\|\Delta_k I(t-k)\|_2} = \|A u_k(t-k)\|_2.
    % \label{eq:s_k}
\end{equation}

By the assumption that $u_k(t-k)$ remains constant for $1 \le k \le N$, the ratio $s_k(t-k)$ is invariant \wrt $k$.
\end{proof}

Note that the two assumptions in Proposition~\ref{proposition} are commonly observed in diffusion models. First, changes in input features are usually small~\citep{ma2024deepcache,selvaraju2024fora,so2312frdiff}, which allows for the output features to be locally approximated using Taylor’s theorem. Second, the works of~\citep{lv2024fastercache,qiu2025accelerating} show that the direction of feature changes remains largely consistent across timesteps. We provide empirical validation of the assumptions in Sec.~\ref{sec:consistency}.

% We provide the following proposition to explain the consistency of $s_k(t-k)$.

% \begin{proposition}
% Assume that \( O(t) \) is a linear function of \( I(t) \), and the direction of the difference vector \( \Delta_k I(t-k) \) is constant for all \( k \). Then the ratio $s_k(t-k)$ is invariant \wrt \( k \).
% \label{proposition}
% \end{proposition}

% The proposition holds under two commonly observed conditions in diffusion models.  First, changes in input features are usually small~\citep{ma2024deepcache,selvaraju2024fora,so2312frdiff}, allowing for a local linear approximation by the Taylor’s theorem. Second, prior works~\citep{lv2024fastercache,qiu2025accelerating} have observed that the direction of feature changes is largely consistent across timesteps. Empirical validation and a detailed proof are provided in Appendix.

Based on the above finding, we propose RFE, a simple yet effective method that estimates the magnitude of changes in output features $\|\Delta_kO(t-k)\|_2$ by leveraging the changes in input features $\|\Delta_kI(t-k)\|_2$. Specifically, we can rewrite Eq.~(\ref{eq:s_k}) as follows:
\begin{equation}
    \|\Delta_kO(t-k)\|_2  = s_k(t-k) \|\Delta_kI(t-k)\|_2.
    \label{eq:output_change}
\end{equation}

Given the consistency of $s_k(t-k)$ across timesteps, we can approximate Eq.~(\ref{eq:output_change}) as follows:
\begin{equation}
    \|\Delta_kO(t-k)\|_2  \approx s_N(t) \|\Delta_kI(t-k)\|_2,
\end{equation}

where $s_N(t)$ is the ratio computed between the two most recent full computations with the interval of $N$. Note that obtaining an input feature $I(t-k)$ is efficient, since it only requires lightweight operations~(\eg,~LayerNorm~\citep{ba2016layer}, scaling, and shifting). Consequently, RFE refines the predicted magnitude of feature changes in Eq.~(\ref{eq:taylorseer}) as follows:
\begin{equation}
   O_{\text{RFE}}(t-k) = O(t) +  (s_N(t) \|\Delta_kI(t-k)\|_2)  g\!\left(\sum_{i=1}^{m} \frac{k^i}{i!} \frac{\Delta_N^i O(t)}{N^i}\right),
   \label{eq:RFE}
\end{equation}

where $g(\cdot)$ is an $L_2$ normalization function as follows:
\begin{equation}
    g(x)=\frac{x}{\|x\|_2}.
\end{equation}

RFE can capture irregular dynamics of feature changes, reducing the error of feature prediction, efficiently and effectively.

\paragraph{Relational cache scheduling~(RCS).}

Although RFE significantly improves the accuracy of the feature prediction, estimation errors are unavoidable and tend to fluctuate over timesteps, making a fixed caching interval $N$ suboptimal. To address this, we propose RCS, a dynamic scheduling strategy that determines when to perform a full computation based on the estimated prediction error. To this end, we define the error of the output prediction at timestep $t-k$ as follows:
\begin{equation}
\label{eq:out_error}
E_O(t-k) = O(t-k) - O_{\text{RFE}}(t-k).
\end{equation}
However, computing $E_O(t-k)$ is not feasible during sampling, since it depends on $O(t-k)$, which requires a costly computation (\eg, attention or MLP). Instead, we propose to estimate this error using the error of the input prediction $E_I(t-k)$ as a proxy:
\begin{equation}
    E_I(t-k) =  I(t-k) - I_{\text{Taylor}}(t-k),
\end{equation}
where the predicted input feature $I_{\text{Taylor}}(t-k)$ is computed with the Taylor expansion as follows\footnote{RFE is not applied to input prediction, as it is defined to exploit input–output relationships.}:
\begin{equation}
    \label{eq:input_redict}
    I_{\text{Taylor}}(t-k) = I(t) + \sum_{i=1}^{m} \frac{k^i}{i!} \frac{\Delta_N^i I(t)}{N^i}.
\end{equation}
Our intuition is that prediction errors in the output tend to increase when the feature changes abruptly, and such changes are highly correlated with the variations in the corresponding input~(Figs.~\ref{fig:teaser}(a-b) and~\ref{fig:analysis}(a)). Therefore, errors in the output prediction are likely to be correlated with those in the input prediction. To support this, we compare the relative $L_1$ error of the output and input features, which is defined as follows:
\begin{equation}
    \mathcal{E}_O(t-k) = \frac{\|E_O(t-k)\|_1}{\|O(t-k)\|_1}, \quad
    \mathcal{E}_I(t-k) = \frac{\|E_I(t-k)\|_1}{\|I(t-k)\|_1}.
    \label{eq:relative_L1}
\end{equation}

 As shown in Fig.~\ref{fig:analysis}(b), the trends of the input and output errors closely align across timesteps. Based on this, we aim to estimate the accumulated errors in the output prediction during the sampling process by tracking the relative $L_1$ error of the input prediction for the first module. We then perform a full computation when the accumulated error exceeds a predefined threshold $\tau$ as follows:
\begin{equation}
\sum_{j=1}^{k} \mathcal{E}_I(t-j) > \tau.
\label{eq:rcs_trigger}
\end{equation}
The left term in Eq.~(\ref{eq:rcs_trigger}) effectively reflects the accumulation of errors in the output prediction. By adjusting $\tau$, RCS controls the trade-off between the generation quality and efficiency. RCS performs full computations more frequently at timesteps, where the accumulated errors are large, reducing the cache error and improving the generation quality.

%% file: sections/4_experiments.tex
\section{Experiments}

In this section, we first describe our implementation details~(Sec.~\ref{sec:impl}). We then provide quantitative and qualitative comparisons of RFC with state-of-the-art methods~(Sec.~\ref{sec:results}), and present an in-depth analysis of RFC~(Sec.~\ref{sec:discussion}). More experimental results and a discussion on the limitation of our work can be found in Appendix.

% \rev{Please refer to Appendix for more results and a discussion on the limitation of our work.}

% \rev{We provide in Appendix more analyses (Sec.~\ref{sec:more_analyses}), more qualitative results (Sec.~\ref{sec:more_results}), and a discussion on the limitation of our work (Sec.~\ref{sec:limitation}).}

\subsection{Implementation details}
\label{sec:impl}
\paragraph{Datasets and models.}
We apply RFC to various DiT models and perform extensive experiments on standard benchmarks for the class-conditional, text-to-image, and text-to-video generation tasks. For class-conditional generation, we perform experiments using the DiT-XL/2~\citep{peebles2023scalable} model trained on ImageNet~\citep{deng2009imagenet}. We use FLUX.1 dev~\citep{labs2025flux1kontextflowmatching} for text-to-image generation and HunyuanVideo~\citep{kong2024hunyuanvideo} for text-to-video generation, and evaluate them on DrawBench~\citep{saharia2022photorealistic} and VBench~\citep{huang2024vbench}, respectively. We use the DDIM sampler~\citep{song2020denoising} with 50 steps for DiT-XL/2, while using the Rectified Flow sampler~\citep{liu2022flow} for FLUX.1 dev and HunyuanVideo with the same number of steps.

\paragraph{Evaluation and metric.}

Following the previous approaches~\citep{zou2025accelerating, liu2025reusing}, we evaluate 50K images of size 256$\times$256 for class-conditional generation using standard metrics, including Fréchet Inception Distance (FID)~\citep{heusel2017gans}, sFID~\citep{nash2021generating}, and Inception Score (IS)~\citep{salimans2016improved}. In addition, we introduce FID2FC and sFID2FC, which quantify the degradation introduced by caching through measuring the FID and sFID scores, respectively, between the images obtained from full computations and feature caching methods. For the text-to-image generation task, we generate 200 images of size 1000$\times$1000 from DrawBench prompts~\citep{saharia2022photorealistic} and evaluate the image quality and text-to-image alignment using ImageReward~\citep{xu2023imagereward} and the CLIP score~\citep{hessel2022clipscore}, respectively. We also evaluate PSNR, SSIM~\citep{wang2004image}, and LPIPS~\citep{zhang2018unreasonable} computed between images generated by full computations and caching approaches. For text-to-video generation, we produce 2,838 videos by generating three videos for each of the 946 prompts. We evaluate the generated videos with the VBench score~\citep{huang2024vbench}, as well as PSNR, SSIM, and LPIPS. For all tasks, we report FLOPs and latency to evaluate the computational efficiency. For a fair comparison, we reproduce the results of state-of-the-art methods using the official source codes, and adjust the threshold $\tau$ in Eq.~(\ref{eq:rcs_trigger}) to ensure that the average number of full computations (NFC) matches that of other methods.

\subsection{Results}
\label{sec:results}
\input{Tables/sota_dit}
\input{Tables/sota_flux}
\input{Tables/sota_video}

\paragraph{Quantitative results.}
We show in Tables~\ref{tab:DiT}-\ref{tab:Video} quantitative comparisons of RFC and state-of-the-art methods~\citep{selvaraju2024fora,zou2025accelerating,zou2024accelerating,liu2025reusing} for class-conditional image, text-to-image, and text-to-video generation tasks, respectively.
We summarize our findings as follows: 
(1) RFC outperforms existing caching approaches across various generative tasks, by a large margin. For example, we can see from Table~\ref{tab:DiT} that RFC with 3.37 TFLOPs outperforms TaylorSeer~\citep{liu2025reusing} with 4.76 TFLOPs by 1.26 in terms of sFID. We can also see that the improvement is significant for metrics that compare the quality of generated images from full computations and caching approaches~(\eg,~FID2FC, sFID2FC, PSNR, SSIM~\citep{wang2004image}, and LPIPS~\citep{zhang2018unreasonable}), suggesting that RFC can accurately estimate output features by leveraging the relationship with the corresponding input features.
(2) RFC performs particularly well under limited computational budgets. In such cases, prior approaches suffer from large errors in feature prediction, mainly due to irregular changes of features across timesteps, which degrades the performance. Even with limited computation, RFC accurately estimates the output features with RFE and effectively performs full computations with RCS, preserving the generation quality.
(3) With a similar NFC, the overhead of RFC compared to TaylorSeer~\citep{liu2025reusing} is minimal. This is because computing the input features in RFC only requires lightweight operations such as LayerNorm~\citep{ba2016layer}, shifting, and scaling, which adds negligible computational costs, while improving the generation quality significantly.

\begin{figure}[t]
    \centering
    \includegraphics[width=\textwidth]{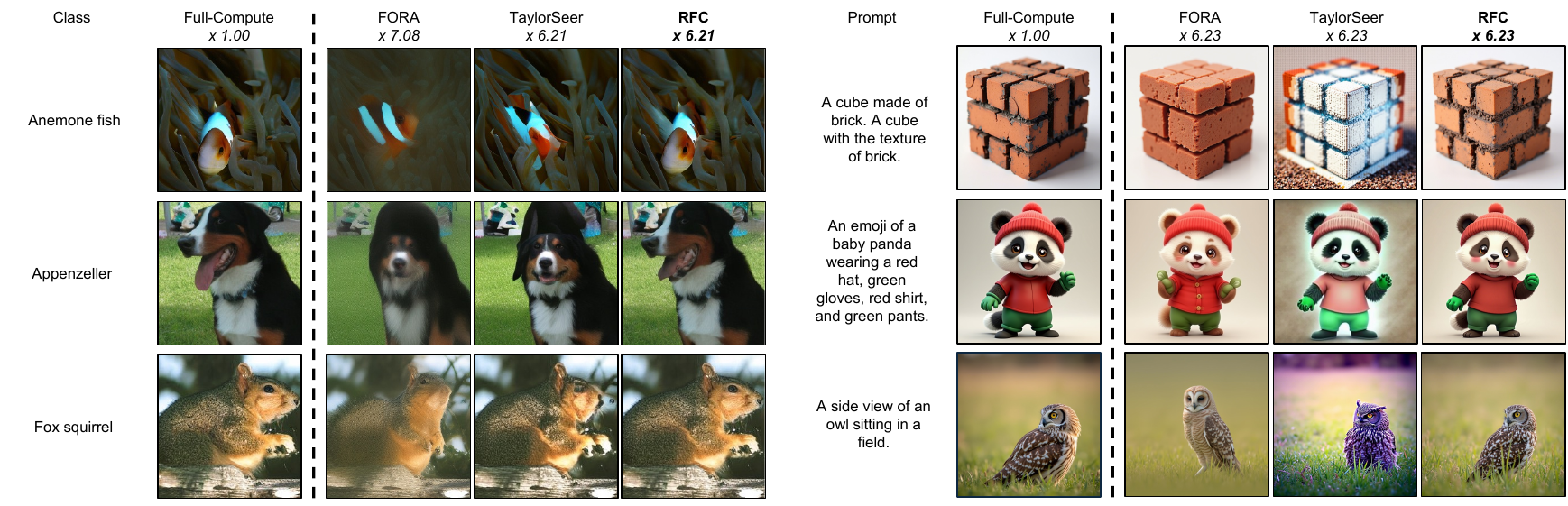}
    \vspace{-0.1cm}
    \caption{Qualitative comparisons of (left) class-conditional image generation for DiT-XL/2~\citep{peebles2023scalable} on ImageNet~\citep{deng2009imagenet}, and (right) text-to-image generation for FLUX.1 dev~\citep{labs2025flux1kontextflowmatching} on DrawBench~\citep{saharia2022photorealistic}.
    }
    \label{fig:dit_flux_quali}
\end{figure}

\begin{figure}[t]
    \centering
    \includegraphics[width=\textwidth]{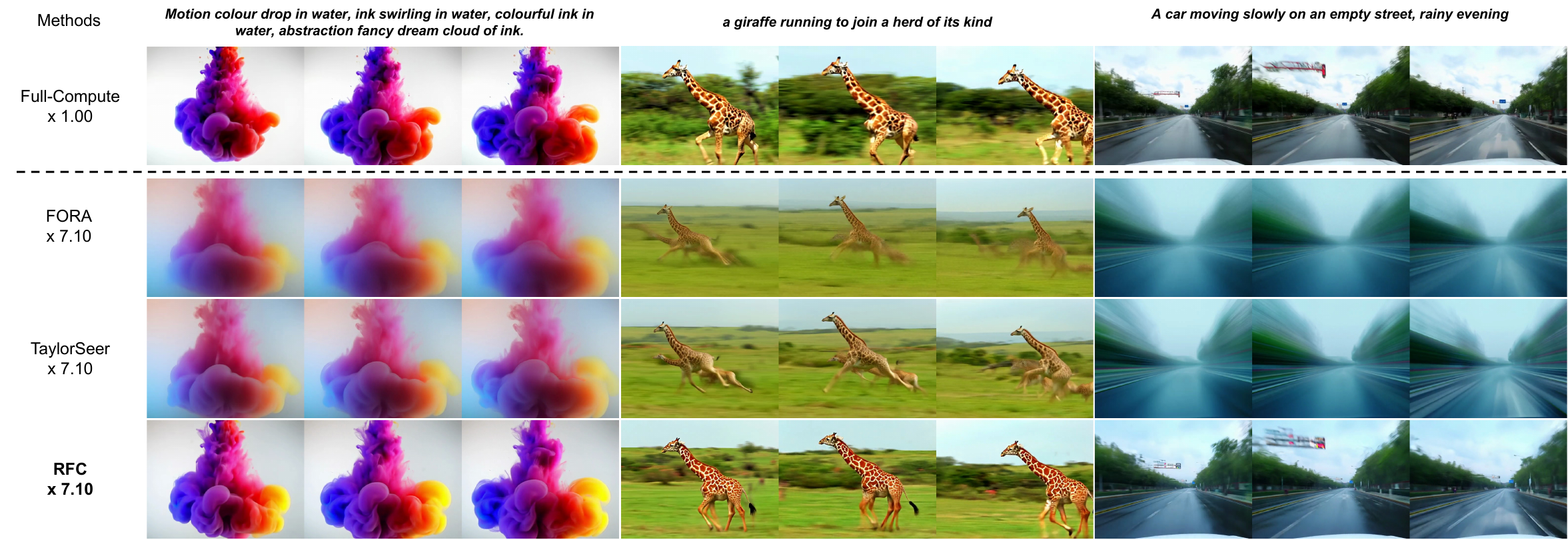}
    \vspace{-0.1cm}
    \caption{Qualitative comparisons of text-to-video generation for HunyuanVideo~\citep{kong2024hunyuanvideo} on VBench~\citep{huang2024vbench}. Please see the supplementary material for the actual video clips.}
    \label{fig:video_quali}
    \vspace{-0.1cm}
\end{figure}

\paragraph{Qualitative results.}
We show in Fig.~\ref{fig:dit_flux_quali} the qualitative comparisons of images generated using DiT-XL/2~\citep{peebles2023scalable} and FLUX.1 dev~\citep{labs2025flux1kontextflowmatching}. We observe that RFC produces more realistic images that are visually closer to those generated by full computations. For example, in the first row of Fig.~\ref{fig:dit_flux_quali}(right), RFC better preserves the brick structure compared to the prior approaches~\citep{selvaraju2024fora, liu2025reusing}, closely matching the image generated by full computations, indicating that RFC effectively minimizes the prediction errors in the output features. We also show in Fig.~\ref{fig:video_quali} generated videos using HunyuanVideo~\citep{kong2024hunyuanvideo}. We can see that RFC produces high-quality videos that are similar to those generated by full computations, demonstrating the effectiveness of RFC once more. Please see Appendix for more results.

\subsection{Discussion}
\label{sec:discussion}
\input{Tables/ablations}

\paragraph{Ablations.}
We show in Table~\ref{tab:ablations} the ablation study for each component of RFC. We can see that applying either RFE or RCS alone already outperforms TaylorSeer~\citep{liu2025reusing} regardless of NFC. For instance, when NFC is 11, applying RFE and RCS to TaylorSeer reduces the sFID from 5.85 to 5.22 and 5.21, respectively. This indicates that both RFE and RCS can effectively improve feature prediction by leveraging the input–output feature relationship. Moreover, combining both components~(\ie,~RFC) further reduces the sFID to 4.88, suggesting that RFE and RCS are complementary to each other.

\input{Tables/discussion}

\paragraph{Relational feature estimation~(RFE).}
Feature forecasting methods based on linear extrapolation techniques~\citep{lv2024fastercache, qiu2025accelerating} can be formulated as:
\begin{equation}
    O_{\text{pred}}(t-k) = O(t) + \frac{k}{N}w(t)\Delta_NO(t)
    \label{gen_rfe}
\end{equation}
where $O_{\text{pred}}(t-k)$ is the predicted output feature at timestep $t-k$, $\Delta_NO(t)$ is the difference between features from the two latest full-compute steps, $N$ is the interval between the full computations, and $w(t)$ is a scaling factor at the timestep $t$. Existing approaches differ in how $w(t)$ is determined. For instance, FasterCache~\citep{lv2024fastercache} linearly increases $w(t)$ from zero to one as $t$ increases, while GOC~\citep{qiu2025accelerating} employs a fixed hyperparameter for $w(t)$. We show in Table~\ref{tab:mag} quantitative comparisons between RFE and these strategies. We can see that RFE consistently outperforms all the others significantly. This suggests that a simple linear extrapolation fails to capture the irregular changes in the output features, while RFE improves the prediction by leveraging the input–output relationship.

\paragraph{Relational cache scheduling~(RCS)}
RCS performs full computations based on the prediction error in input features of the first module. To evaluate the effectiveness of RCS, we provide in Table~\ref{tab:schedule} quantitative results of various scheduling strategies on ImageNet~\citep{deng2009imagenet}. First, we adopt a distance-based strategy~\citep{liu2025timestep} and schedule full computations based on the distance between the current and the latest cached input features, instead of exploiting the prediction error. We can see that RCS achieves stronger performance, suggesting that the prediction error is more reliable for forecasting-based methods than the simple distance measure. Second, we add the input prediction errors of all modules rather than using the first module only. RCS remains comparable to this variant, indicating that using the error from the first module is sufficient.

% First, following TeaCache~\citep{liu2025timestep}, we measure the distance between the input feature at the last full-compute timestep and that of the current timestep, and perform full computations when the distance exceeds a threshold. We can see that RCS achieves stronger performance, suggesting that the prediction error is more reliable for forecasting-based methods than the simple distance measure. Second, we add the input prediction errors of all modules rather than using the first module only. RCS remains comparable to this variant, indicating that using the error from the first module is sufficient.

% We provide in Table~\ref{tab:schedule} quantitative results of various scheduling strategies on ImageNet~\citep{deng2009imagenet} to evaluate the effectiveness of RCS. First, following TeaCache~\citep{liu2025timestep}, we measure the distance between the input feature at the last full-compute timestep and that of the current timestep, and perform full computations when the distance exceeds a threshold. We can see that RCS achieves stronger performance, suggesting that the prediction error is more reliable for forecasting-based methods than the simple distance measure. Second, we add the input prediction errors of all modules rather than using the first module only. RCS remains comparable to this variant, indicating that using the error from the first module is sufficient.

\section{Conclusion}
We have shown that output feature changes in DiTs are highly irregular across timesteps, while still maintaining a strong correlation with their corresponding inputs. Based on this, we have introduced a new caching framework, dubbed RFC, that leverages the input-output relationship for accurate feature prediction, consisting of two novel components, RFE and RCS. We have demonstrated that RFC achieves the state of the art on standard benchmarks, and further validated the effectiveness of each component through a detailed analysis.

\section*{Acknowledgements}
% [SW스타(50), 학습역량(50), YU-KIST, 삼성LSI]
This work was partly supported by IITP grant funded by the Korea government (MSIT) (No.RS-2022-00143524, Development of Fundamental Technology and Integrated Solution for Next-Generation Automatic Artificial Intelligence System, No.2022-0-00124, RS-2022-II220124, Development of Artificial Intelligence Technology for Self-Improving Competency-Aware Learning Capabilities), the KIST Institutional Program (Project No.2E33001-24-086), and Samsung Electronics Co., Ltd (IO240520-10013-01).

%% file: Tables/sota_dit.tex
\begin{table}[!t]
  \sisetup{
    detect-weight   = true,
    detect-family   = true,
    mode            = text
  }
  \scriptsize

  \caption{Quantitative comparisons on class-conditional image generation for DiT-XL/2~\citep{peebles2023scalable} on ImageNet~\citep{deng2009imagenet}. 
  Numbers in bold indicate the best performance, while underscored ones are the second best among similar computational costs. $N$ is the interval between full computations, and $m$ refers to the order of the Taylor expansion.}
  % \vspace{-3.5mm}
  \label{tab:DiT}
  \centering
  \setlength{\tabcolsep}{2pt} 
  \resizebox{0.99\linewidth}{!}{
    \begin{tabular}{l c c c  c c c c c}
      \toprule
      \multicolumn{1}{c}{\textbf{Methods}} & \textbf{NFC} & \textbf{FLOPs (T)}$\downarrow$ & \textbf{Latency (s)}$\downarrow$ & \textbf{FID}$\downarrow$ & \textbf{sFID}$\downarrow$ & \textbf{FID2FC}$\downarrow$ & \textbf{sFID2FC}$\downarrow$ & \textbf{IS}$\uparrow$ \\
      \midrule
      Full-Compute    & 50 & 23.74 & 7.61 & ~2.32  & ~4.32  & -  & -  & 241.25  \\
      \midrule
      FORA ($N=4$)~\citep{selvaraju2024fora}    & 14 & 6.66 & 2.37 & ~~4.33  & ~~7.38  & 1.90  & ~~6.32  & 215.33  \\
      ToCa ($N=4$)~\citep{zou2025accelerating}    & 17 & 8.73 & 3.64 & ~~3.03  & ~~5.02  & 0.86  & ~~3.56  & 229.01 \\
      DuCa ($N=4$)~\citep{zou2024accelerating}    & 17  & 7.66 & 2.94 & ~~3.39  & ~~\underline{4.95} & 1.19 & ~~4.22 & 224.03 \\
      TaylorSeer ($N=4, m=2$)~\citep{liu2025reusing} & 14 & 6.66 & 2.84 & ~~\underline{2.55} & ~~5.30 & \underline{0.44} & ~~\underline{2.17} & \textbf{232.26} \\
      \rowcolor[HTML]{ECF9FF}RFC ($ m=2$) & 14.01  & 6.67 & 2.86  & ~~\textbf{2.52} & ~~\textbf{4.60} & \textbf{0.30} & ~~\textbf{1.33} & \underline{231.00} \\
      \midrule
      FORA ($N=5$)~\citep{selvaraju2024fora} &  11 & 5.24  & 2.03  & ~~6.12 & 10.51 & 3.57 & 10.19 & 196.04 \\
      ToCa ($N=5$)~\citep{zou2025accelerating} & 14 & 7.43  & 3.35  & ~~3.53 & ~~\underline{5.36} & 1.28 & ~~4.50  & 224.07 \\
      DuCa ($N=5$)~\citep{zou2024accelerating} &  14 & 6.32  & 2.57  & ~~6.02 & ~~6.71  & 4.19 & ~~8.18  & 196.96 \\
      TaylorSeer ($N=5, m=2$)~\citep{liu2025reusing} & 11 & 5.24 & 2.52 & ~~\textbf{2.70} & ~~5.45 & \underline{0.56} & ~~\underline{2.74} & \textbf{228.76} \\
      \rowcolor[HTML]{ECF9FF}RFC ($ m=2$) & 11.01  & 5.24 & 2.52 & ~~\underline{2.71} & ~~\textbf{4.80} & \textbf{0.48} & ~~\textbf{2.14} & \underline{227.72} \\
      \midrule
      FORA ($N=6$~\citep{selvaraju2024fora}) & 10 & 4.76  & 1.82  & ~8.12 & 12.94 & 5.32 & 12.87 & 179.11 \\
      ToCa ($N=6$)~\citep{zou2025accelerating} & 13 & 7.01  & 3.27  & ~~3.54 & ~~\underline{5.54} & 1.44 & ~~5.18  & \underline{226.32} \\
      DuCa ($N=6$)~\citep{zou2024accelerating} & 13 & 5.86  & 2.53  & ~~6.37 & ~~6.74  & 4.51 & ~~7.86  & 187.74 \\
      TaylorSeer ($N=6, m=2$)~\citep{liu2025reusing} & 10  & 4.76 & 2.27  & ~~\underline{3.10} & ~~6.47 & \underline{1.00} & ~~\underline{4.63} & 222.85 \\
      \rowcolor[HTML]{ECF9FF}RFC ($ m=2$) & 10.04  & 4.78 & 2.46  & ~~\textbf{2.75} & ~~\textbf{5.02} & \textbf{0.54} & ~~\textbf{2.43} & \textbf{226.93} \\
      \midrule
      FORA ($N=7$)~\citep{selvaraju2024fora} & 8  & 3.35  & 1.73  & 12.63 & 18.49 & 9.66 & 19.11 & 147.06 \\
      ToCa ($N=7$)~\citep{zou2025accelerating} & 14 & 6.25  & 2.90  & ~~4.04  & ~~\underline{5.72} & 1.62 & ~~\underline{4.72} & 214.25 \\
      DuCa ($N=7$)~\citep{zou2024accelerating} & 14 & 4.96  & 2.24  & ~~7.76  & ~~7.77  & 5.45 & ~~8.80  & 181.50 \\
      TaylorSeer ($N=7, m=2$)~\citep{liu2025reusing} & 8 & 3.82 & 2.02  & ~~\underline{3.46} & ~~6.97 & \underline{1.30} & ~~5.61 & \underline{217.18} \\
      \rowcolor[HTML]{ECF9FF}RFC ($ m=2$) & 8.02 & 3.83 & 2.15  & ~~\textbf{3.12} & ~~\textbf{5.07} & \textbf{0.81} & ~~\textbf{3.10} & \textbf{219.20} \\
      \midrule
      TaylorSeer ($N=9, m=2$)~\citep{liu2025reusing} & 7  & 3.35 & 1.89 & ~~\underline{4.90} & ~~\underline{7.92} & \underline{2.33} & ~~\underline{7.35} & \underline{198.46} \\
      \rowcolor[HTML]{ECF9FF}RFC ($ m=2$) & 7.04 & 3.37 & 1.99 & ~~\textbf{3.40} & ~~\textbf{5.21} & \textbf{1.03} & ~~\textbf{3.66} & \textbf{215.39} \\
      \bottomrule
    \end{tabular}
  }
\end{table}

%% file: Tables/sota_flux.tex
\begin{table}[!t]
  \sisetup{
    detect-weight   = true,  % 글자 굵기(bold) 감지
    detect-family   = true,  % 글꼴 체(family) 감지
    mode            = text,  % 숫자 모드 설정 
  }
  \scriptsize
  \setlength{\tabcolsep}{2pt}  % 열 간격 축소
  \caption{Quantitative comparisons on text-to-image generation for FLUX.1 dev~\citep{labs2025flux1kontextflowmatching} on DrawBench~\citep{saharia2022photorealistic}. Numbers in bold indicate the best performance, while underscored ones are the second best among similar computational costs. $N$ is the interval between full computations, and $m$ refers to the order of the Taylor expansion.}
  % \vspace{-3.5mm}
  \label{tab:FLUX}
  \centering
  \resizebox{0.99\linewidth}{!}{
    \begin{tabular}{l c S[table-format=4.2] S[table-format=2.3] c c c c c }
      \toprule
      \multicolumn{1}{c}{\textbf{Methods}} & \textbf{NFC} & \textbf{FLOPs (T)}$\downarrow$ & \textbf{Latency (s)}$\downarrow$ & \textbf{PSNR}$\uparrow$ & \textbf{SSIM}$\uparrow$ & \textbf{LPIPS}$\downarrow$ & \textbf{IR}$\uparrow$ & \textbf{CLIP}$\uparrow$ \\
      \midrule
      Full-Compute    &  50 & 2813.50 & 26.210  &  - &  - &  - & 0.9655 & 17.0720 \\
      \midrule
      FORA ($N=3$)~\citep{selvaraju2024fora}    & 17  & 957.33  &  10.489 & 17.4967 & 0.7236  & 0.4209  & 0.9111  & 16.9515 \\
      ToCa ($N=5$)~\citep{zou2025accelerating}    & 14  & 1056.26 &  18.591 & 18.0333  & 0.7269 & 0.3490  & \underline{0.9457} & 17.0063 \\
      TaylorSeer ($N=4,m=1$)~\citep{liu2025reusing} &  14 & 788.59 & 8.863  & 19.5722 & 0.7695 & 0.3226 & 0.9204  & \textbf{17.0659} \\  
      TaylorSeer ($N=4,m=2$) & 14  & 788.59 &  8.985 & 19.7733  & 0.7706  & 0.3175 & 0.9407 & \underline{17.0527} \\  
      \rowcolor[HTML]{ECF9FF}RFC ($ m=1$)  & 14.02  & 789.82 & 9.148  & \underline{20.2707} & \underline{0.7905} & \textbf{0.2934} & 0.9448 & 17.0288 \\
      \rowcolor[HTML]{ECF9FF}RFC ($ m=2$)& 13.80  & 777.44 & 9.364  & \textbf{20.3524} & \textbf{0.7930} & \underline{0.2952} & \textbf{0.9499} & 17.0394 \\
      \midrule
      FORA ($N=7$)~\citep{selvaraju2024fora}   & 8  & 451.10 &  6.065 & 15.8656  & 0.6692 & 0.5557  & 0.6121 & 16.8531 \\
      ToCa ($N=12$)~\citep{zou2025accelerating}       & 5  & 673.88 &  11.504 & 16.2240  & 0.6479 & 0.5512 & 0.6238  & 16.9020 \\
      TaylorSeer ($N=9,m=1$)~\citep{liu2025reusing} & 8  & 451.10 & 6.382  & 16.1346 & 0.6736 & 0.5261 & 0.7602 & \textbf{17.0769} \\
      TaylorSeer ($N=9,m=2$) & 8  & 451.10 & 6.796  & 16.5492  & 0.6563 & 0.5328 & 0.7996  & \underline{17.0534} \\
      \rowcolor[HTML]{ECF9FF}RFC ($ m=1$)  & 8.00  & 451.23 & 6.407  & \underline{16.7952} & \textbf{0.6979} & \textbf{0.4645} & \underline{0.9142} & 17.0165 \\
      \rowcolor[HTML]{ECF9FF}RFC ($ m=2$)  & 8.03 & 452.91 & 6.849  & \textbf{16.9188} & \underline{0.6936} & \underline{0.4708} & \textbf{0.9189} & 16.9641 \\
      \bottomrule
    \end{tabular}
  }
\end{table}

%% file: Tables/sota_video.tex
\begin{table}[!t]
    \sisetup{
      detect-weight   = true,  % 글자 굵기(bold) 감지
      detect-family   = true,   % 글꼴 체(family) 감지
      mode            = text,  % 숫자 모드 설정         
    }
    \scriptsize
    \setlength{\tabcolsep}{2pt}  % 열 간격 축소
    \caption{Quantitative comparisons on text-to-video generation for HunyuanVideo~\citep{kong2024hunyuanvideo} on VBench~\citep{huang2024vbench}. Numbers in bold indicate the best performance among similar computational costs. $N$ is the interval between full computations, and $m$ refers to the order of the Taylor expansion.}
    % \vspace{-3.5mm}
    \label{tab:Video}
    \centering
    \resizebox{0.99\linewidth}{!}{
      \begin{tabular}{l c S[table-format=4.2] S[table-format=3.3] c c c c }
        \toprule
        \multicolumn{1}{c}{\textbf{Methods}} & \textbf{NFC} & \textbf{FLOPs (T)}$\downarrow$ & \textbf{Latency (s)}$\downarrow$ & \textbf{PSNR}$\uparrow$ & \textbf{SSIM}$\uparrow$ & \textbf{LPIPS}$\downarrow$ & \textbf{VBench}$\uparrow$ \\
        \midrule
        Full-Compute    & 50  & 7520.00  & 263.480  &  - &  - &  - & 81.40 \\
        \midrule
        FORA ($N=6$)~\citep{selvaraju2024fora}   &  9 & 1359.19 & 55.339  & 15.5221 & 0.4349 & 0.2743 & 78.51 \\
        TaylorSeer ($N=5,m=1$)~\citep{liu2025reusing}  & 10  & 1510.21  &   60.051 &  16.3606 &  0.5126 &  0.1985 & 80.77 \\  
        TaylorSeer ($N=6,m=1$)  &  9 & 1359.19  & 55.872  &  15.5321 & 0.4606  & 0.2448  & 79.52 \\  
        \rowcolor[HTML]{ECF9FF}RFC ($ m=1$)  &  8.96 & 1354.65  &  57.786 &  \textbf{18.5408} & \textbf{0.6352}  & \textbf{0.1329}  & \textbf{80.83} \\
        \midrule
        FORA ($N=8$)~\citep{selvaraju2024fora}  & 7  & 1058.45  & 48.340  &  14.8878 & 0.4114  & 0.2959  & 77.50 \\
        TaylorSeer~\citep{liu2025reusing} ($N=8,m=1$)  & 7  &  1058.45 &  48.736 & 15.1999  &  0.4408 & 0.2624  & 79.59 \\  
        \rowcolor[HTML]{ECF9FF}RFC ($ m=1$)  &  7.09 &  1072.65 &   49.684 &  \textbf{18.2521} & \textbf{0.6155}  & \textbf{0.1436}  & \textbf{80.49} \\
        \bottomrule
      \end{tabular}
    }
\end{table}

%% file: Tables/ablations.tex
\begin{table}[t]
  \centering
  \sisetup{
    detect-weight   = true,
    detect-family   = true,
    mode            = text,      
  }
  \scriptsize
  \caption{Quantitative comparison of different components for DiT-XL/2~\citep{peebles2023scalable} on ImageNet~\citep{deng2009imagenet}. All results are obtained using the first-order approximation~($m=1$).}
  % \vspace{-3.5mm}
  \label{tab:ablations}
    \small
    \begin{tabular}{l c c c c c}
      \toprule
      \multicolumn{1}{c}{\textbf{Methods}} & \textbf{NFC} & \textbf{FID}$\downarrow$ & \textbf{sFID}$\downarrow$ & \textbf{FID2FC}$\downarrow$ & \textbf{sFID2FC}$\downarrow$ \\
      \midrule
       TaylorSeer~\citep{liu2025reusing} & 14 & 2.65 & 5.60 & 0.57 & 2.77 \\
       RFE & 14 & 2.52 & 5.18 & 0.43 & 2.02 \\
       RCS & 14 & 2.52 & 4.76 & 0.36 & 1.88 \\
       RFC (RFE + RCS) & 14  & 2.51 & 4.66 & 0.31 & 1.41 \\
      \midrule
       TaylorSeer~\citep{liu2025reusing}  & 11 & 2.87 & 5.85 & 0.73 & 3.53 \\
       RFE & 11 & 2.69 & 5.22 & 0.55 & 2.55 \\
       RCS & 11 & 2.77 & 5.21 & 0.62 & 3.09 \\
       RFC (RFE + RCS) & 11 & 2.71 & 4.88 & 0.51 & 2.30 \\
      \bottomrule
    \end{tabular}
  
\end{table}

%% file: Tables/discussion.tex
\begin{table*}[t]
  \centering
    \centering
    \sisetup{
      detect-weight   = true,
      detect-family   = true,
      mode            = text,      
    }
    \scriptsize
    \caption{Quantitative comparison of different strategies for estimating the change in output features for DiT-XL/2~\citep{peebles2023scalable} on ImageNet~\citep{deng2009imagenet}. All results are obtained using the first-order approximation~($m=1$).}
    % \vspace{-3.5mm}
    \label{tab:mag}
    % \renewcommand{\arraystretch}{0.9}

      % preamble
% \usepackage{multirow}

\small

\begin{tabular}{l c c c c}
        \toprule
        \multicolumn{1}{c}{\textbf{Methods}} & \textbf{NFC} & \textbf{FID2FC}$\downarrow$ & \textbf{sFID2FC}$\downarrow$ & \textbf{Latency (s)}$\downarrow$\\
        \midrule
        % \addlinespace[-0.001cm]
        % \multicolumn{3}{c}{\textbf{NFC = 14}} \\ \addlinespace[-0.08cm]
        % \midrule
        Linear & 14  & 0.73 & 3.40 & 2.85 \\
        $w(t)=0.8$ & 14  & 0.73 & 3.36& 2.84 \\
        $w(t)=1.0$ & 14 & 0.57 & 2.77 & 2.84 \\
        $w(t)=1.2$ & 14 & 0.52 & 2.51 & 2.84 \\
        \rowcolor[HTML]{ECF9FF}RFE & 14 & 0.43  & 2.02 & 2.86 \\ 
        % \midrule
        % \addlinespace[-0.001cm]
        % \multicolumn{3}{c}{\textbf{NFC = 11}} \\ \addlinespace[-0.08cm]
        \midrule
        Linear & 11  &0.93 & 4.08 & 2.52 \\
        $w(t)=0.8$ &  11  & 0.94 & 4.17 & 2.51 \\
        $w(t)=1.0$ &  11  & 0.73 & 3.53 & 2.51 \\
        $w(t)=1.2$ &  11  & 0.67 & 3.24 & 2.52 \\
        \rowcolor[HTML]{ECF9FF}RFE &  11  & 0.55 & 2.55 & 2.52 \\
        \bottomrule
      \end{tabular}

\end{table*}

\begin{table*}[!th]

    \centering
    \sisetup{
      detect-weight   = true,
      detect-family   = true,
      mode            = text,      
    }
    \scriptsize
    \caption{Quantitative comparison of different strategies for the cache scheduling using DiT-XL/2~\citep{peebles2023scalable} on ImageNet~\citep{deng2009imagenet}. All results are obtained using the first-order approximation~($m=1$).}
    % \vspace{-3.5mm}
    \label{tab:schedule}
    \setlength{\tabcolsep}{4pt}  % default는 6pt

    \small
      \begin{tabular}{l c c c c}
  \toprule
  \multicolumn{1}{c}{\textbf{Methods}} & \textbf{NFC} & \textbf{FID2FC}$\downarrow$ & \textbf{sFID2FC}$\downarrow$ & \textbf{Latency (s)}$\downarrow$ \\
  \midrule
  Input distance & 14.03 & 0.50 & 2.53 & 2.62 \\ 
  All modules & 14.02 & 0.37 & 1.85 & 2.79 \\
  \rowcolor[HTML]{ECF9FF}RCS & 14.00 & 0.36 & 1.88 &2.63 \\
  \midrule
  Input distance & 11.15 & 0.72 & 3.47 & 2.23\\
  All modules & 11.00 & 0.61 & 3.03  & 2.41\\
  \rowcolor[HTML]{ECF9FF}RCS & 11.01 & 0.62 & 3.09 &2.26 \\
  \bottomrule
  \vspace{-2mm}
\end{tabular}

\end{table*}

%% file: sections/5_appendix.tex
\begin{figure}[t]
   \centering
   \captionsetup{font={small}}
   % (a) Base Heatmap
   \begin{subfigure}[t]{0.3\linewidth}
       \centering
       \includegraphics[width=0.9\textwidth]{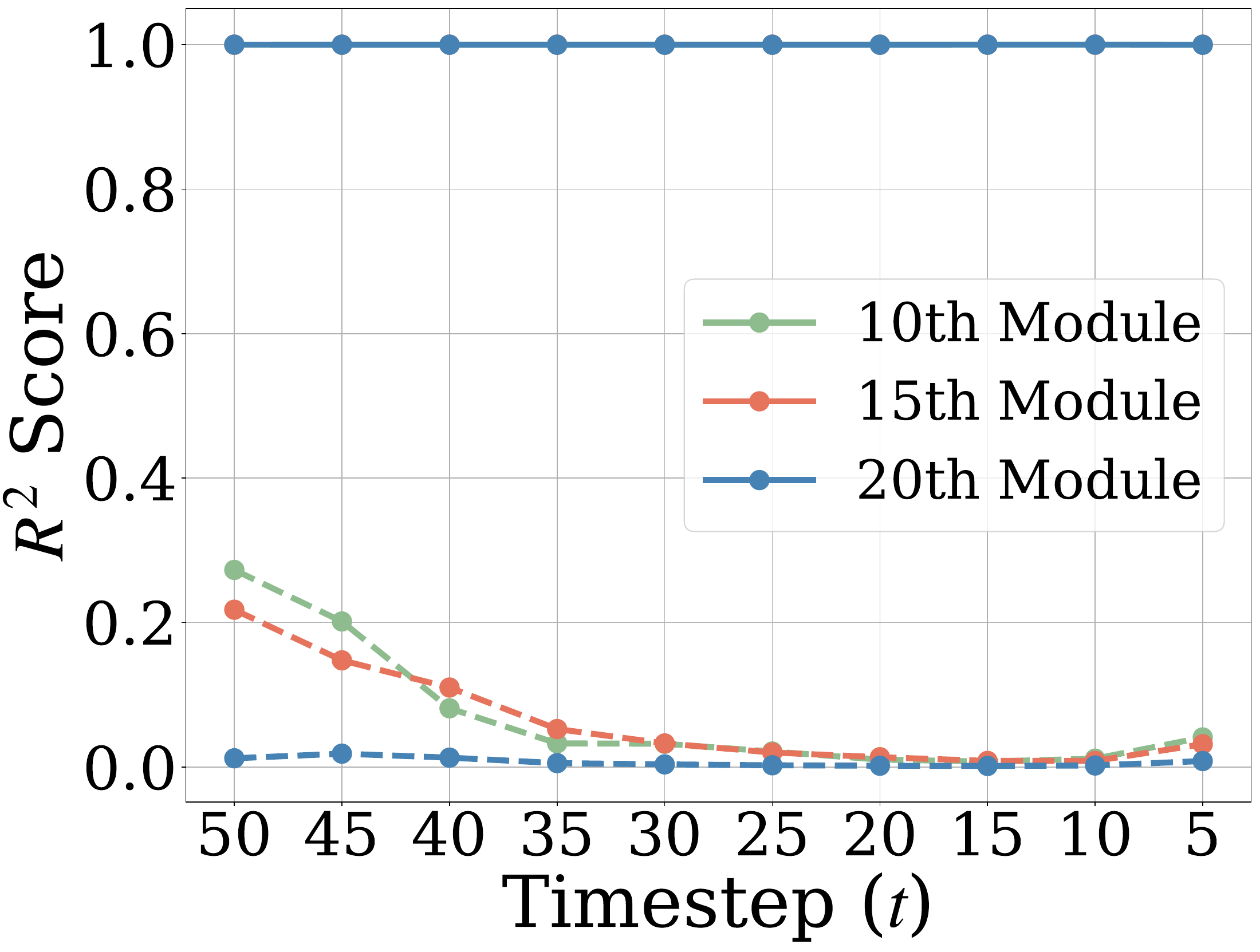}
       %  \vspace{-0.5cm}
       \caption{Linearity.}
   \end{subfigure}
   \hfill
   \begin{subfigure}[t]{0.3\linewidth}
       \centering
       \includegraphics[width=0.9\textwidth]{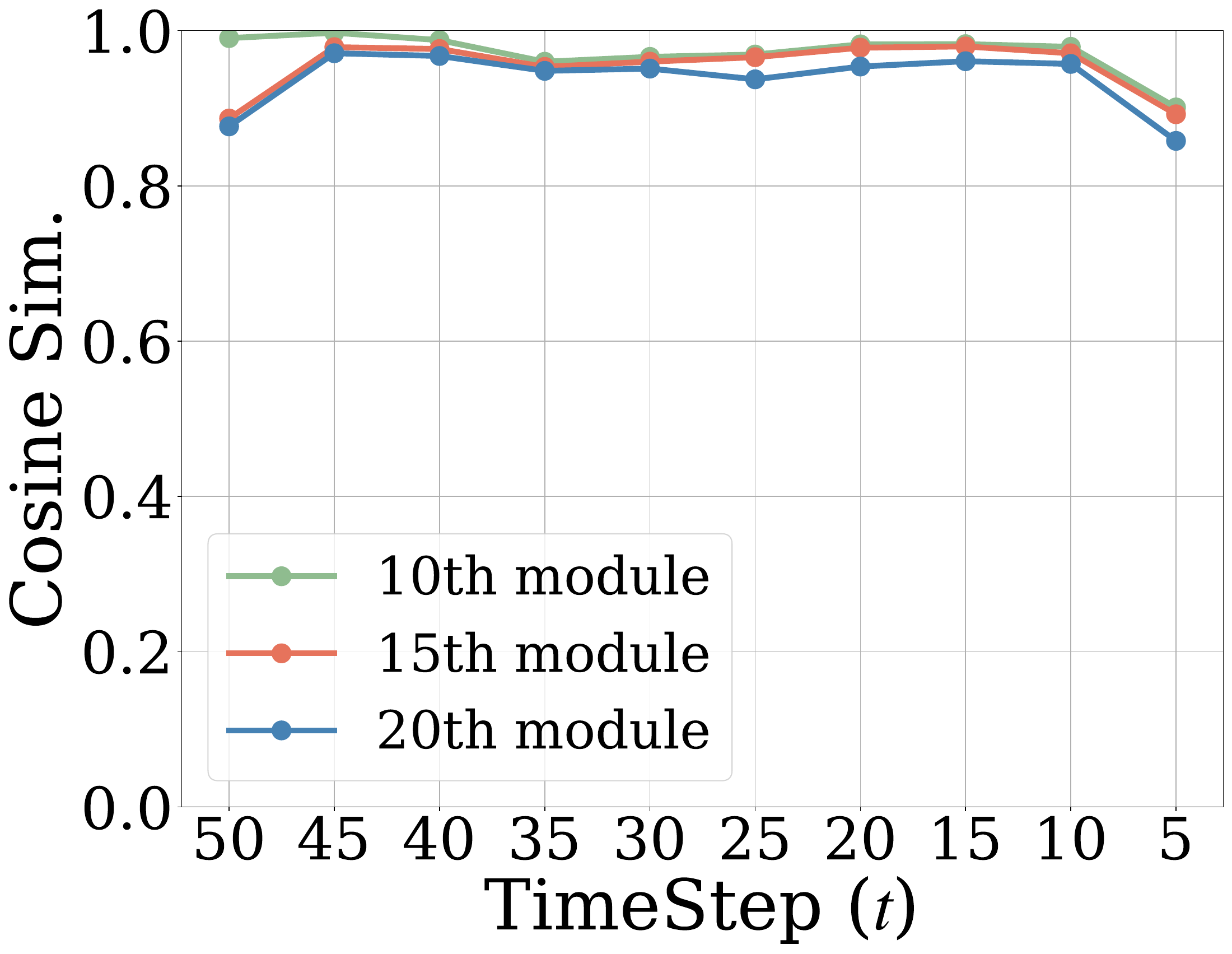}
       %  \vspace{-0.5cm}
       \caption{Input feature consistency.}
   \end{subfigure}
   \hfill
   \begin{subfigure}[t]{0.3\linewidth}
       \centering
       \includegraphics[width=0.9\textwidth]{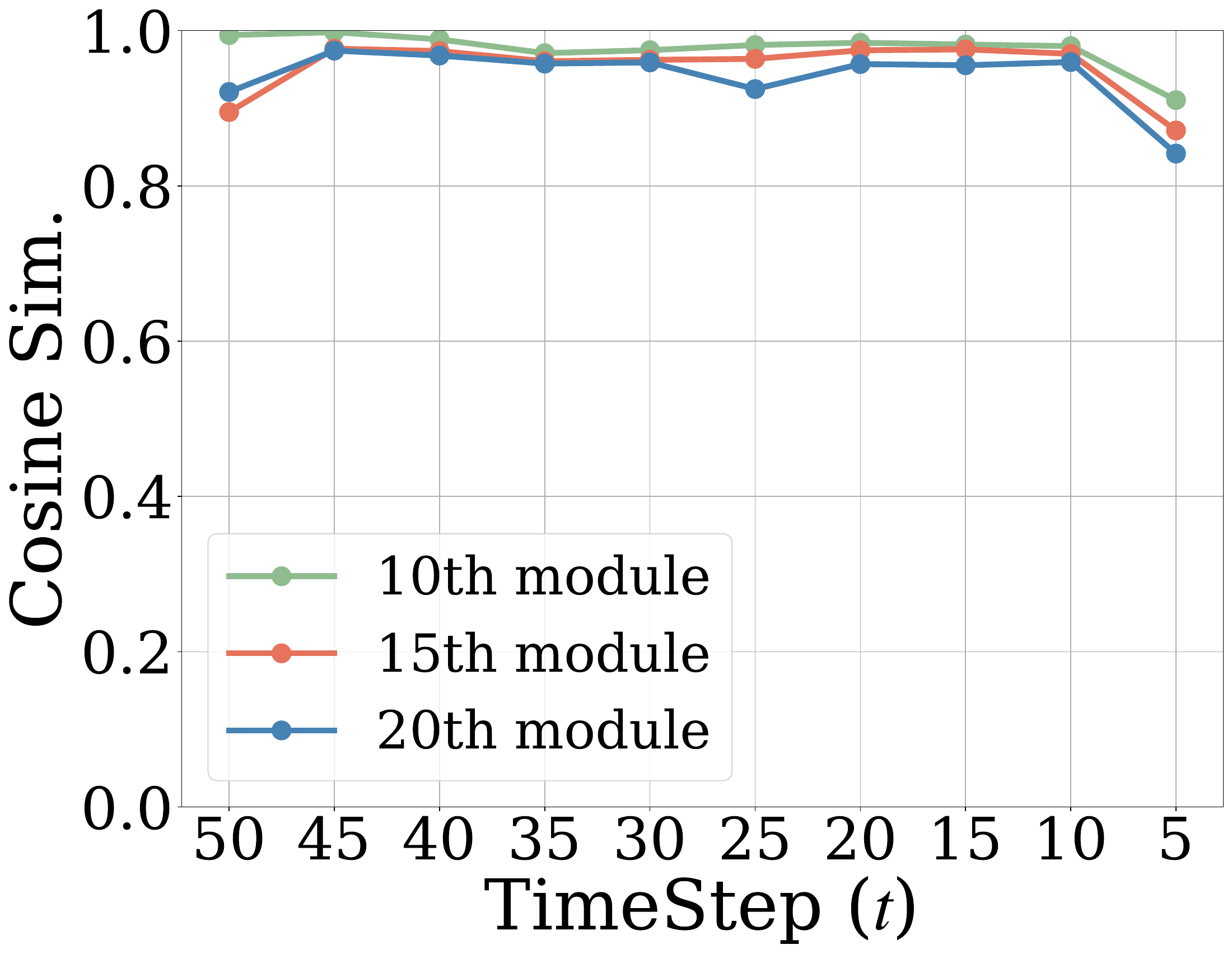}
       %  \vspace{-0.5cm}
       \caption{Output feature consistency.}
   \end{subfigure}
   \hfill

   % \label{fig:heatmap}
   \vspace{-0.2cm}
   \caption{Feature analyses using DiT-XL/2~\citep{peebles2023scalable} on ImageNet~\citep{deng2009imagenet}. (a) A linearity analysis of diffusion features. The solid lines present the linearity between the input and output features, while the dashed lines show the linearity between the timesteps and output features. (b-c) The directional consistency of the input and output features.
   }

   \label{fig:appendix_empirical_validation}
\end{figure}

In the Appendix, we first validate the conditions underlying Proposition~\ref{proposition}~(Sec.~\ref{sec:consistency}), followed by further analyses of the input-output feature relationship~(Sec.~\ref{sec:more_analyses}). We then provide more quantitative and qualitative results~(Sec.~\ref{sec:more_results}) with additional discussions~(Sec.~\ref{sec:app_discussion}). Finally, we present the limitations of RFC~(Sec.~\ref{sec:limitation}) and describe the usage of LLMs~(Sec.~\ref{sec:llm}).

\section{Conditions for Proposition~\ref{proposition}}
\label{sec:consistency}
Proposition~\ref{proposition} relies on two key conditions commonly observed in diffusion models: (1) a locally linear relationship between input and output features, and (2) directional consistency of feature changes across timesteps. The first condition is supported by the fact that feature changes between timesteps are typically small~\citep{ma2024deepcache,selvaraju2024fora,so2312frdiff}, allowing for a local linear approximation by the Taylor’s theorem. The second condition is grounded in empirical observations from prior works~\citep{lv2024fastercache,qiu2025accelerating}, which observe that the direction of feature changes remains largely consistent throughout the denoising process. To further verify these, we provide additional empirical analyses on these conditions.
\paragraph{Linearity.}
To validate the local linearity between the input and output features, we fit a linear model via least squares between the input and output features of the same module~(\eg, attention and MLP), using the features of 5 consecutive timesteps~(\eg,~$t \sim (t-4)$). We show in Fig.~\ref{fig:appendix_empirical_validation}(a) the average coefficient of determination ($R^2$) score of the fitted linear model. We can see that $R^2$ scores show consistently high values~(\eg,~close to 1), demonstrating a strong linearity between input and output features. For comparison, we also compute the $R^2$ scores to quantify the linearity between the timesteps and their corresponding output features. We observe a substantial drop in $R^2$ scores, which indicates that the relationship between the output features and the timesteps is highly non-linear. This is because the magnitude of changes in the output features varies significantly, which limits the effectiveness of previous forecasting methods~\citep{liu2025reusing,lv2024fastercache,qiu2025accelerating} (see Fig.~\ref{fig:teaser}(c-d)).

\paragraph{Directional consistency.}
To validate the directional consistency of feature changes, we compute the pairwise cosine similarities of the feature differences $\Delta_k I(t-k)$ and $\Delta_k O(t-k)$, where $k$ ranges from 1 to 4. Specifically, we compute the difference vectors of features across timesteps~(\eg,~$\Delta_k I(t-k) = I(t-k) - I(t)$ for $k = 1$ to $4$), and measure the pairwise cosine similarities among them to assess how consistent their directions are over time. We then average these similarities to quantify the overall degree of directional consistency. As shown in Fig.~\ref{fig:appendix_empirical_validation}(b-c), the average similarities remain consistently high across modules, indicating that the direction of feature change is preserved over time. These empirical observations align well with the findings of prior studies~\citep{lv2024fastercache,qiu2025accelerating}.

\begin{figure}[t]
   \centering
   \captionsetup{font={small}}
   % (a) Base Heatmap
   \hfill
   \begin{subfigure}[t]{0.24\linewidth}
       \centering
       \includegraphics[width=\textwidth]{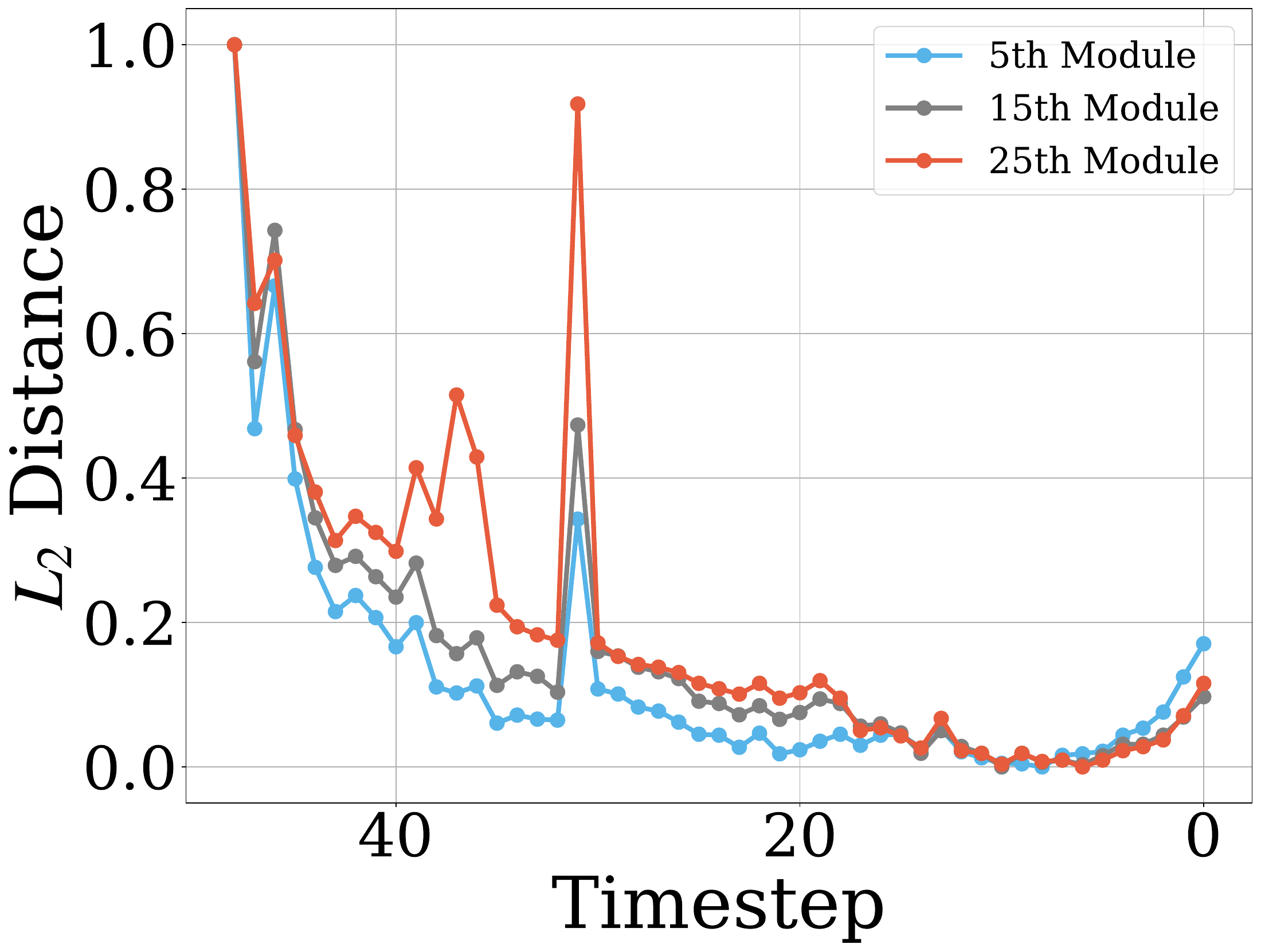} \\
      \includegraphics[width=\textwidth]{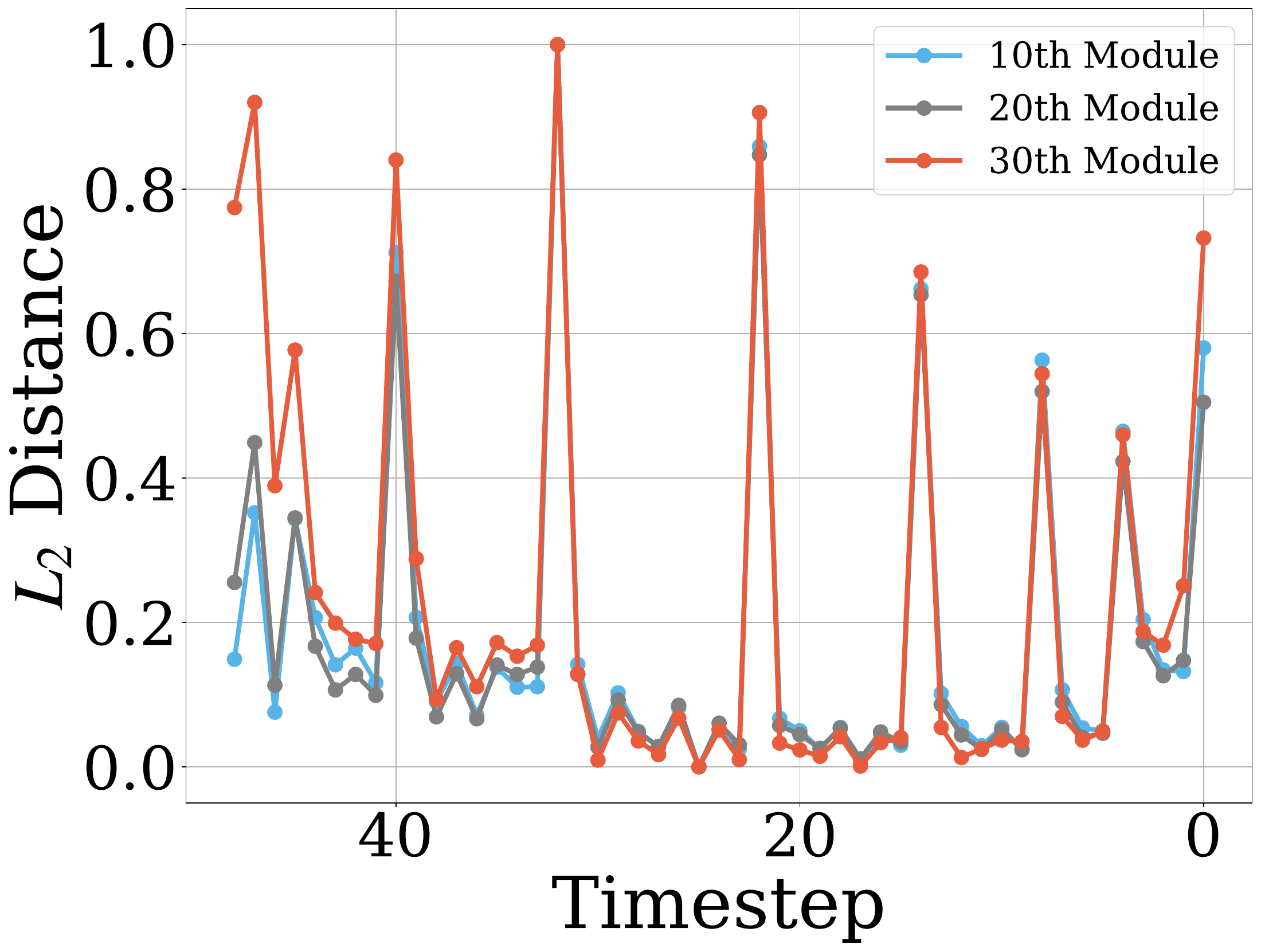}
       \caption{Output difference.}
   \end{subfigure}
   \hfill
   \begin{subfigure}[t]{0.225\linewidth}
       \centering
       \includegraphics[width=\textwidth]{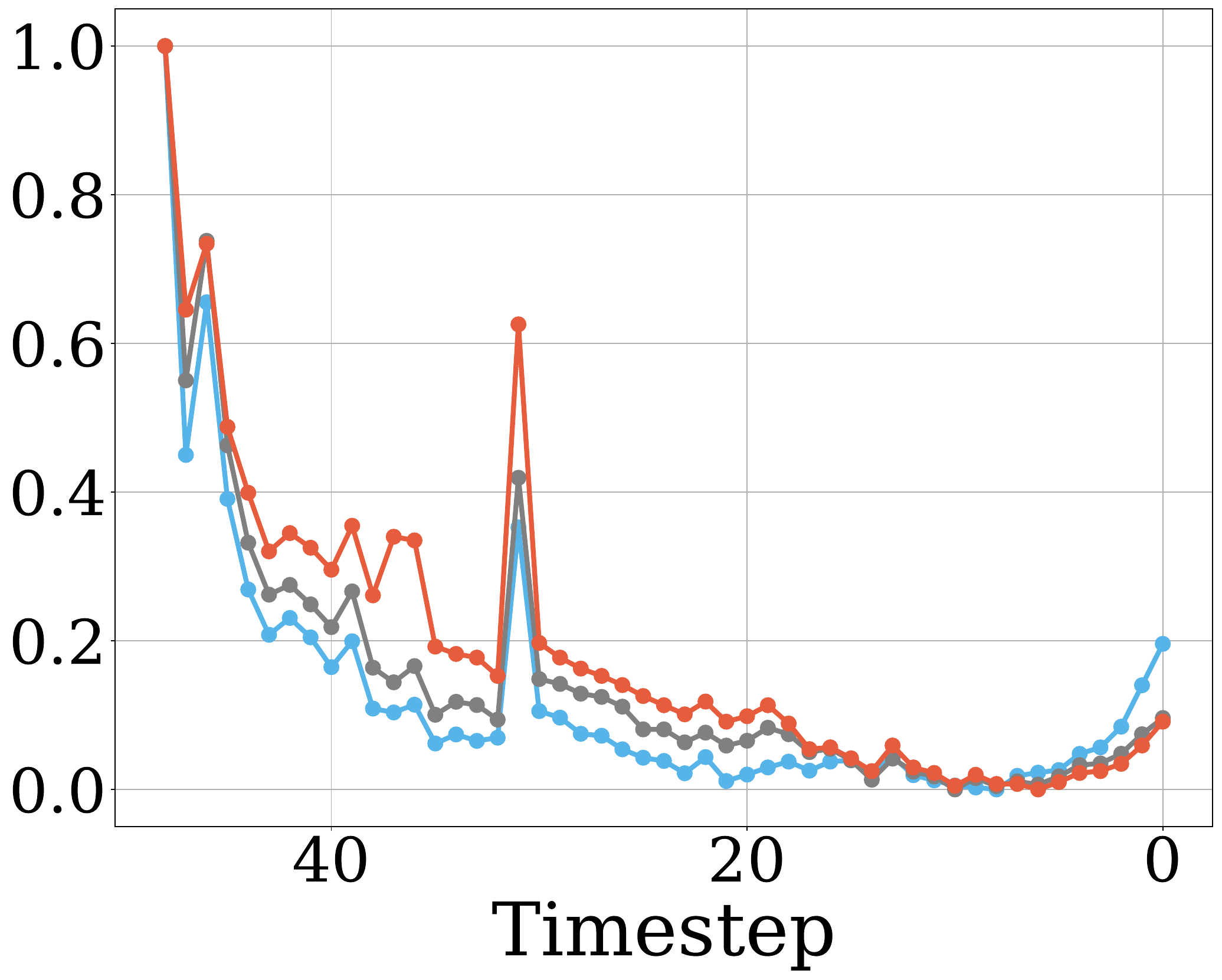} \\
      \includegraphics[width=\textwidth]{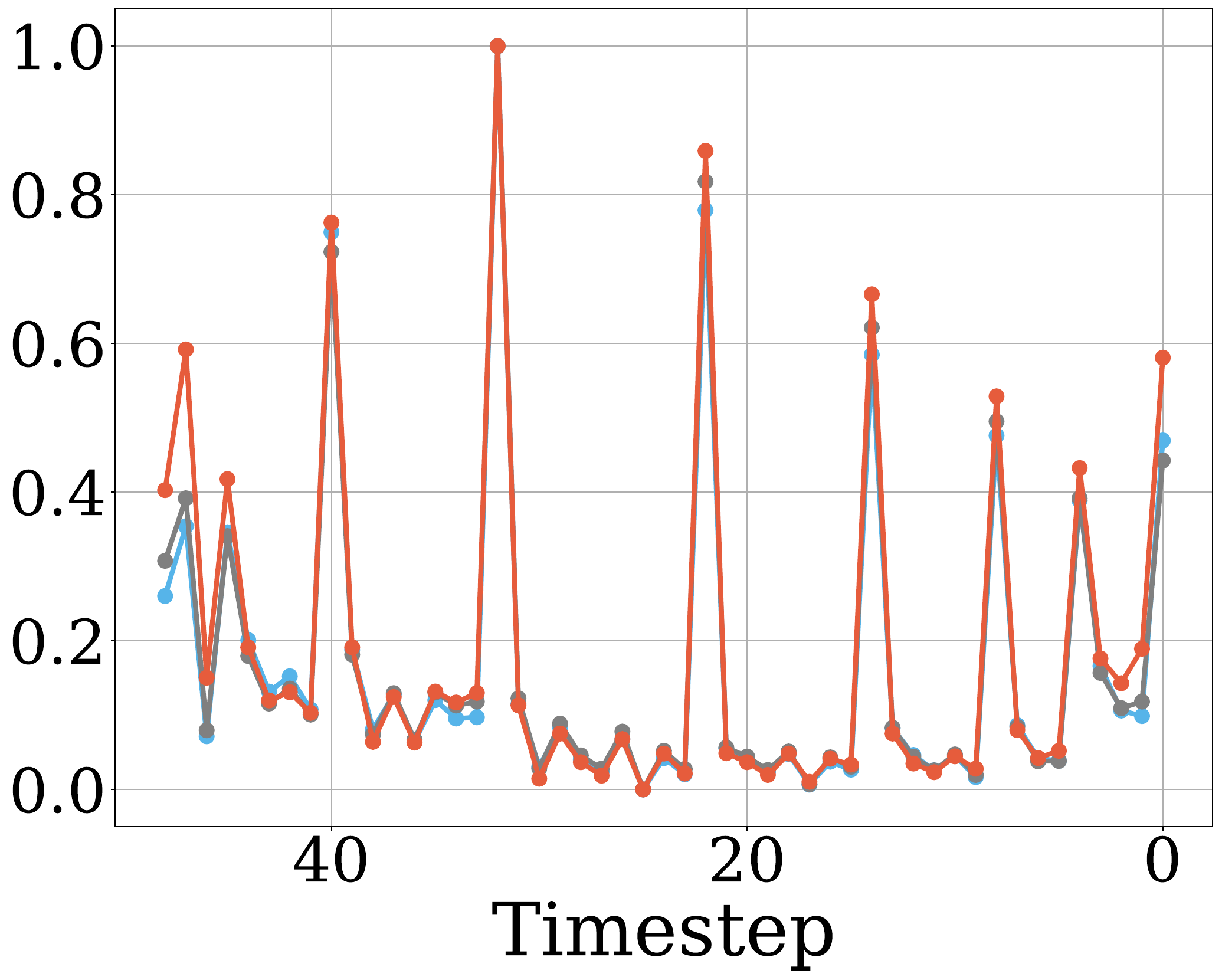}
       \caption{Input difference.}
   \end{subfigure}
   \hfill
   \begin{subfigure}[t]{0.23\linewidth}
       \centering
       \includegraphics[width=\textwidth]{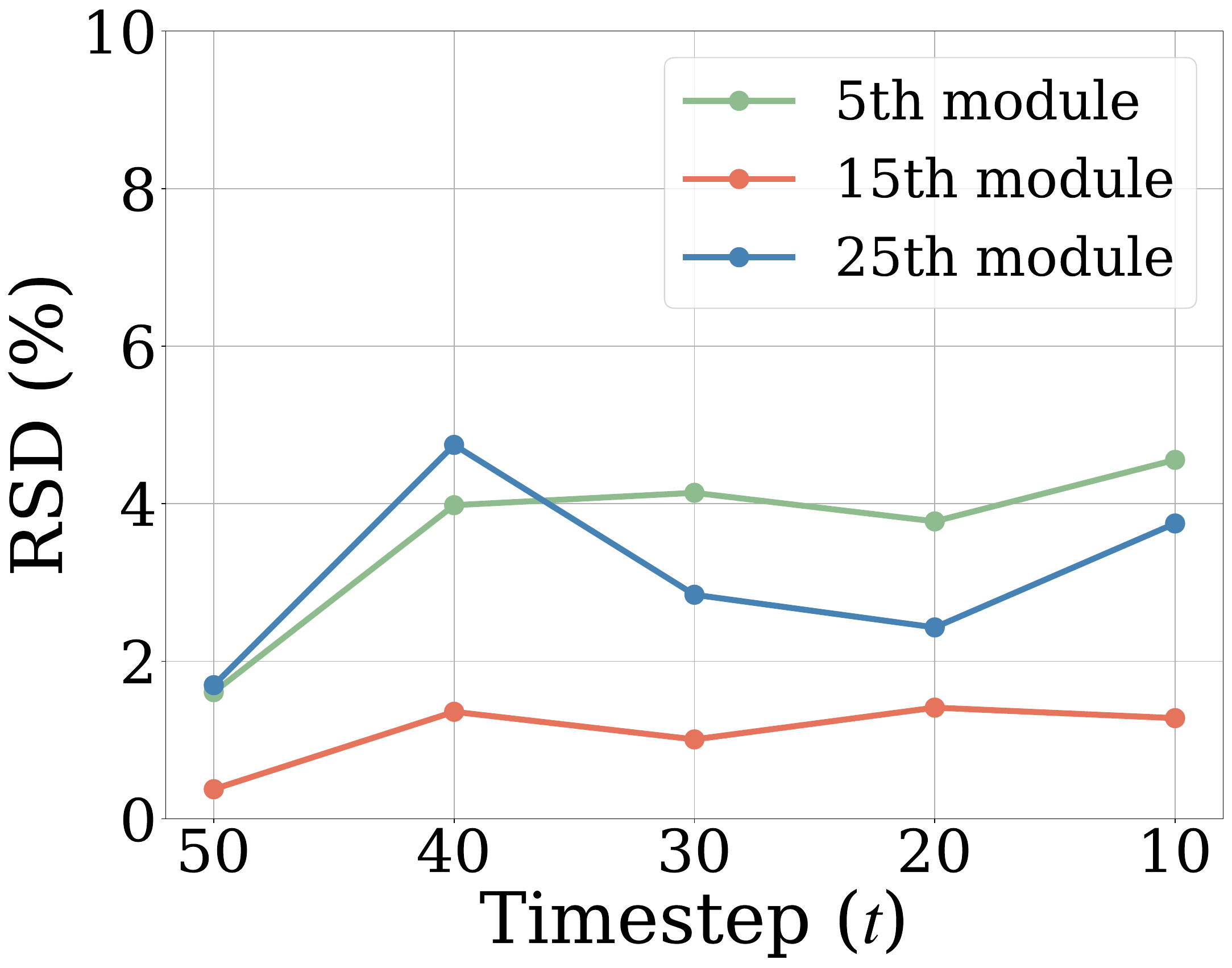} \\
      \includegraphics[width=\textwidth]{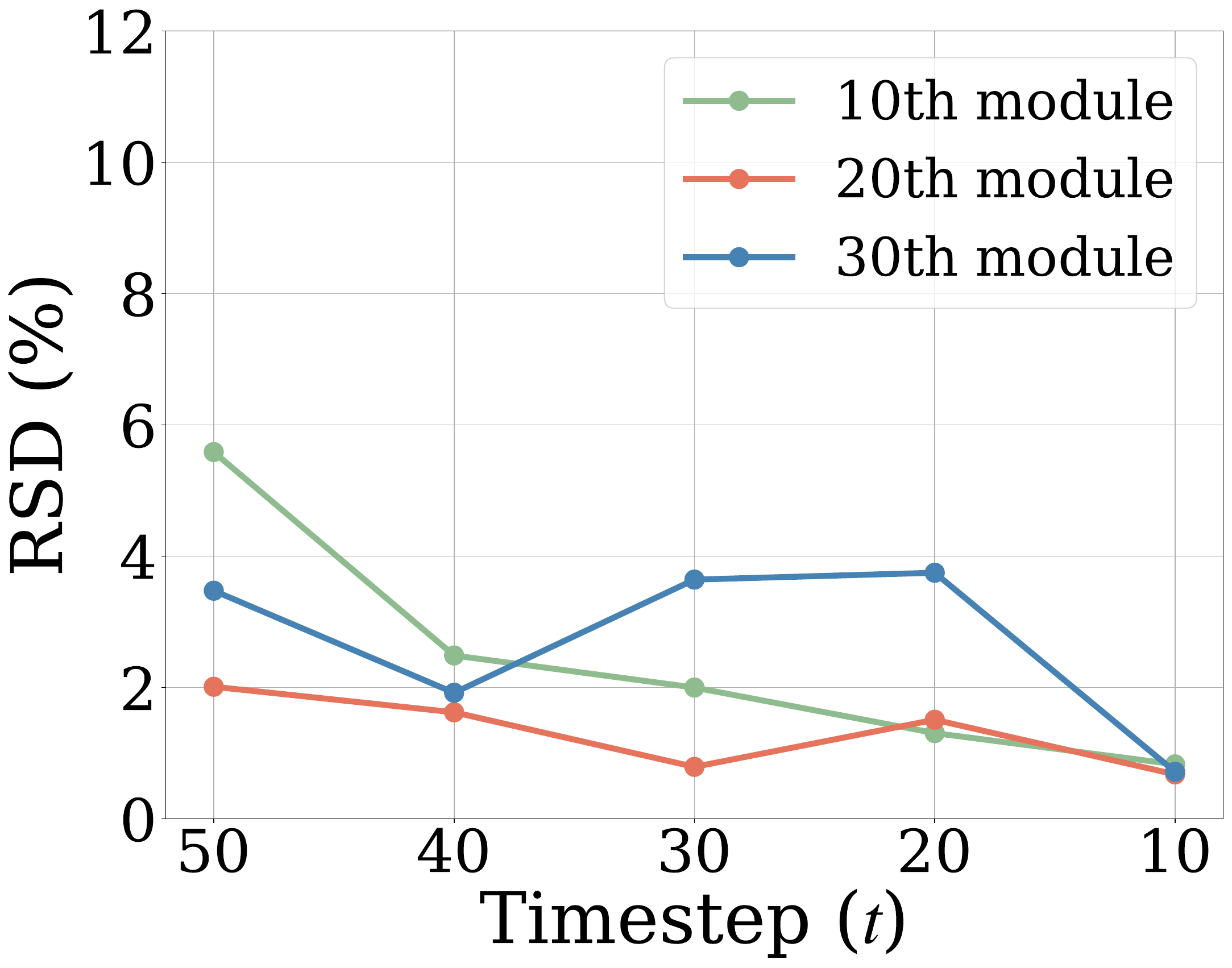}
       \caption{Variation of $s_k$.}
   \end{subfigure}
   \hfill
   % (b) Ours Heatmap
   \begin{subfigure}[t]{0.235\linewidth}
       \centering
       \includegraphics[width=\textwidth]{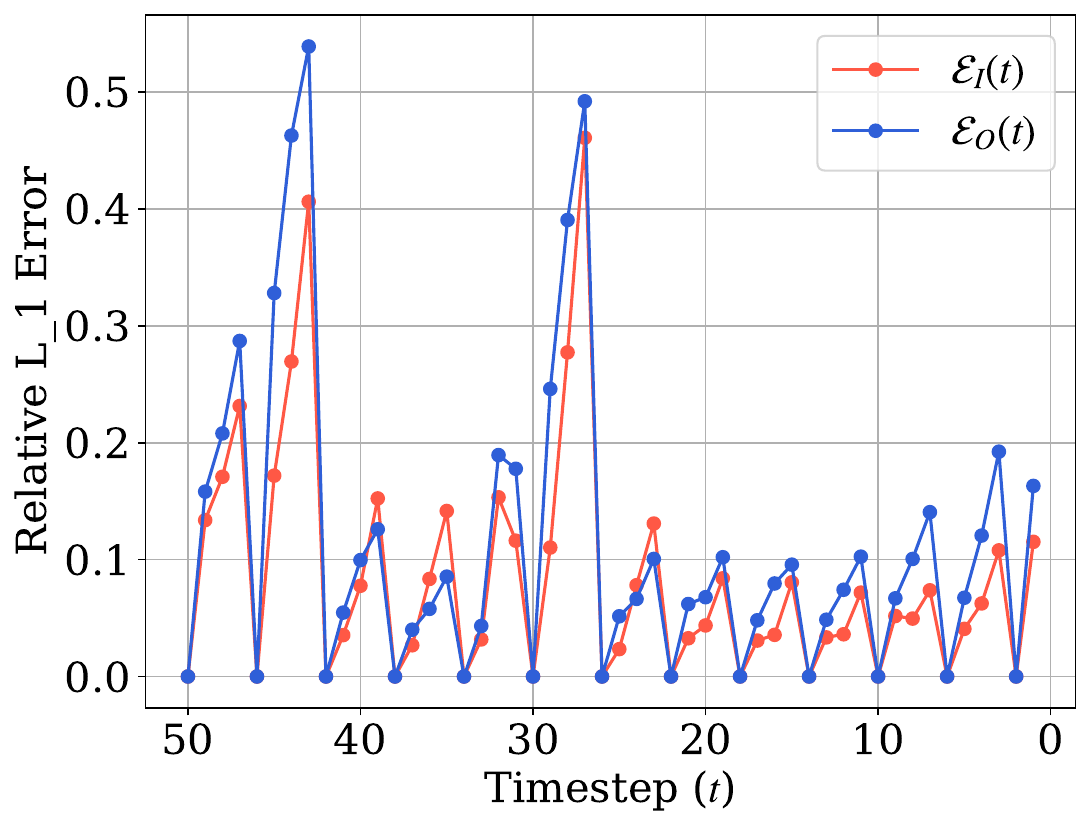} \\
      \includegraphics[width=\textwidth]{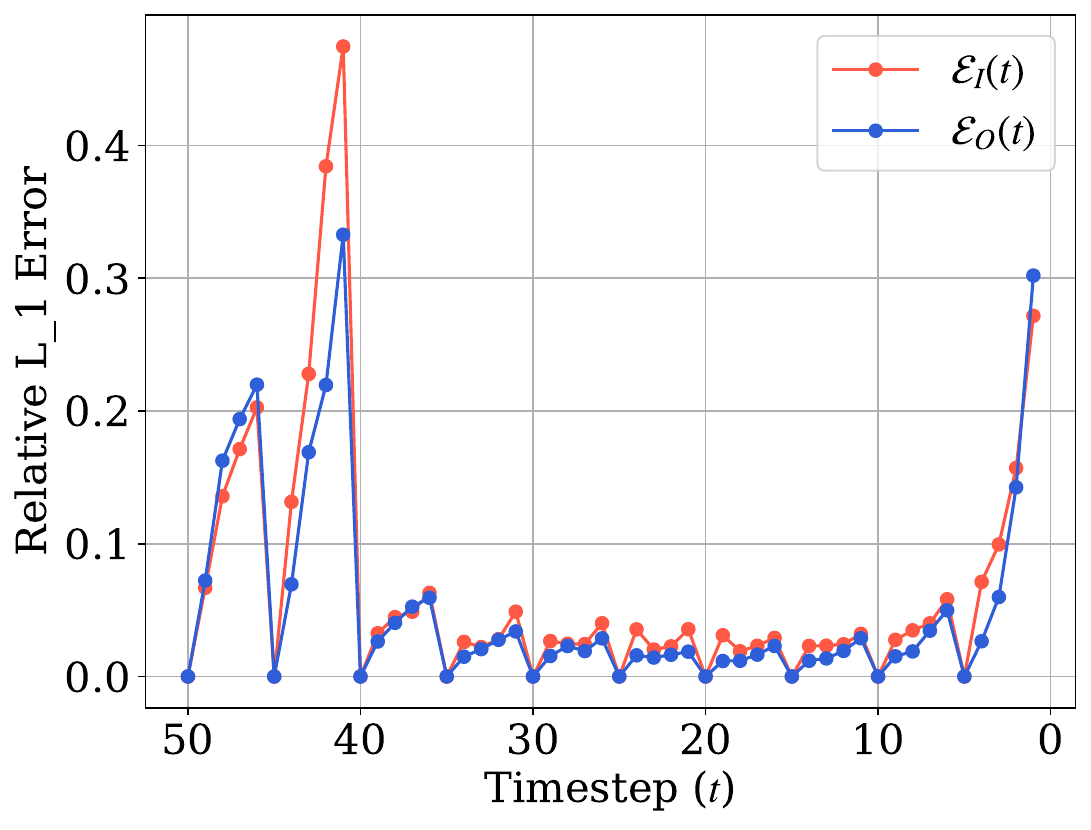}
       \caption{$\mathcal{E}_O$ vs. $\mathcal{E}_I$.}
   \end{subfigure}
   \hfill
   \vspace{-0.2cm}
   \caption{
Analyses using FLUX.1 dev~\citep{labs2025flux1kontextflowmatching} on DrawBench~\citep{saharia2022photorealistic}~(upper row) {and HunyuanVideo~\citep{kong2024hunyuanvideo} on VBench~\citep{huang2024vbench}~(lower row)}. (a-b) Min-max normalized $L_2$ distances of output and input features, measured between consecutive timesteps. (c) RSD of $s_k(t-k)$ in Eq.~(\ref{eq:s_k}) with varying $t$. (d) Relative $L_1$ errors of the output and input features,~\ie, $\mathcal{E}_O(t)$ and $\mathcal{E}_I(t)$ in Eq.~(\ref{eq:relative_L1}), respectively.
}

   % \vspace{-0.4cm}

   \label{fig:appendix_analyses}
\end{figure}

\begin{table}[t]
  \centering
  \sisetup{
    detect-weight   = true,
    detect-family   = true,
    mode            = text,      
  }
  \scriptsize
  \caption{Quantitative comparison under extremely high acceleration ratios for DiT-XL/2~\citep{peebles2023scalable} on ImageNet~\citep{deng2009imagenet}. All results are obtained using the second-order approximation~($m=2$).}
  % \vspace{-3.5mm}
  \label{tab:app_extrem}
  \resizebox{0.8\linewidth}{!}{
    \begin{tabular}{l c c c c c}
      \toprule
      \multicolumn{1}{c}{\textbf{Methods}} & \textbf{NFC} & \textbf{FID}$\downarrow$ & \textbf{sFID}$\downarrow$ & \textbf{FID2FC}$\downarrow$ & \textbf{sFID2FC}$\downarrow$ \\
      \midrule
      Full-Compute & 50 & 2.32 & 4.32 & - & - \\
      \midrule
       TaylorSeer~\citep{liu2025reusing} & 6 & 13.57 & 13.19 & 10.68 & 15.19 \\
       \rowcolor[HTML]{ECF9FF}
       RFC & 6  & 4.31 & 5.39 & 1.83 & 4.80 \\
      \midrule
       TaylorSeer~\citep{liu2025reusing}  & 5 & 45.81 & 27.64 & 42.29 & 31.46 \\
       \rowcolor[HTML]{ECF9FF}
       RFC & 5 & 5.43 & 6.29 & 2.81 & 6.13 \\
       \midrule
       TaylorSeer~\citep{liu2025reusing}  & 4 & 160.85 & 135.55 & 157.53 & 137.65 \\
       \rowcolor[HTML]{ECF9FF}
       RFC & 4 & 8.65 & 7.14 & 5.71 & 8.78 \\
      \bottomrule
    \end{tabular}
  }
\end{table}

\section{More analyses}
\label{sec:more_analyses}
We have shown the strong correlation between the input and output features in the main paper. To further validate the generalizability of the analyses, we provide in Fig.~\ref{fig:appendix_analyses} an additional analysis on the relationship between input and output using FLUX.1 dev~\citep{labs2025flux1kontextflowmatching} on DrawBench~\citep{saharia2022photorealistic}, and {HunyuanVideo~\citep{kong2024hunyuanvideo} on VBench~\citep{huang2024vbench}.} Similar to the findings in Figs.~\ref{fig:teaser} and~\ref{fig:analysis}, we can see that the feature changes are irregular across timesteps, while those of input and output remain closely aligned with each other. We can also see that the ratio between the magnitude of changes in output and input features~(\ie,~$s_k(t-k)$) is highly consistent along timesteps. Finally, we can see that the relative $L_1$ error of the output and input features, $\mathcal{E}_O(t)$ and $\mathcal{E}_I(t)$, respectively, are closely aligned with each other, demonstrating the strong correlation between the input and output features.

\begin{table}[!t]
  \sisetup{
    detect-weight   = true,
    detect-family   = true,
    mode            = text,
  }
  \scriptsize
  \setlength{\tabcolsep}{5pt}
  \caption{Quantitative comparison on distilled model, FLUX.1 schnell~\citep{labs2025flux1kontextflowmatching} with 6 denoising steps, on DrawBench~\citep{saharia2022photorealistic}. All results are obtained using the first-order approximation ($m=1$).}
  \label{tab:app_distill}
  \centering
  \resizebox{0.65\linewidth}{!}{
    \begin{tabular}{l c c c c c }
      \toprule
      \multicolumn{1}{c}{\textbf{Methods}} & \textbf{NFC} & \textbf{PSNR}$\uparrow$
      & \textbf{SSIM}$\uparrow$ & \textbf{LPIPS}$\downarrow$ \\
      \midrule
      Full-Compute & 6 & - & 1.0000 & 0.0000  \\
      \midrule
      Step reduction & 4 & \underline{26.5377} & 0.8855 & 0.1164 \\
      TaylorSeer~\citep{liu2025reusing} & 4 & \underline{29.3435} & \underline{0.9242} & \underline{0.0736} \\
      \rowcolor[HTML]{ECF9FF}RFC & 4 & \textbf{32.7436} & \textbf{0.9275} & \textbf{0.0635} \\
      \midrule
      Step reduction & 3 & \underline{25.0214} & \underline{0.8580} & \underline{0.1511} \\
      TaylorSeer~\citep{liu2025reusing} & 3 & 22.2134 & 0.7491 & 0.3687  \\
      \rowcolor[HTML]{ECF9FF}RFC & 3 & \textbf{27.1185} & \textbf{0.8931} & \textbf{0.0983}  \\
      \bottomrule
    \end{tabular}
  }
\end{table}

\section{More results}
\label{sec:more_results}
{\paragraph{Quantitative results on higher acceleration ratios.}
We show in Table~\ref{tab:app_extrem} quantitative results using DiT-XL/2~\citep{peebles2023scalable} on ImageNet~\citep{deng2009imagenet} under extremely low full computation budgets~(\ie,~NFC = 6, 5, 4). We can see that the performance of TaylorSeer~\citep{liu2025reusing} degrades significantly. This indicates that relying solely on temporal extrapolation is insufficient when intervals between full computations are large. In contrast, RFC maintains strong performance by leveraging input–output correlations, which enables more accurate prediction even when feature changes become irregular and difficult to extrapolate.}

{\paragraph{Quantitative results on distilled model.}
We show in Table~\ref{tab:app_distill} quantitative results on the distilled model, FLUX.1 schnell~\citep{labs2025flux1kontextflowmatching}, with 6 denoising steps. For the distilled models, the overall number of denoising steps is significantly smaller, which leads to larger changes in features between consecutive timesteps and makes feature prediction more difficult. Under this setting, we can see that TaylorSeer~\citep{liu2025reusing} shows a notable performance drop, while RFC maintains consistently strong performance across all metrics. {This demonstrates the effectiveness of leveraging the input–output relationship for stable and accurate feature prediction, even when the output features of distilled models highly fluctuate.
}
}

\paragraph{More qualitative results.}

\begin{figure}[t]
    \centering
    \includegraphics[width=0.7\textwidth]{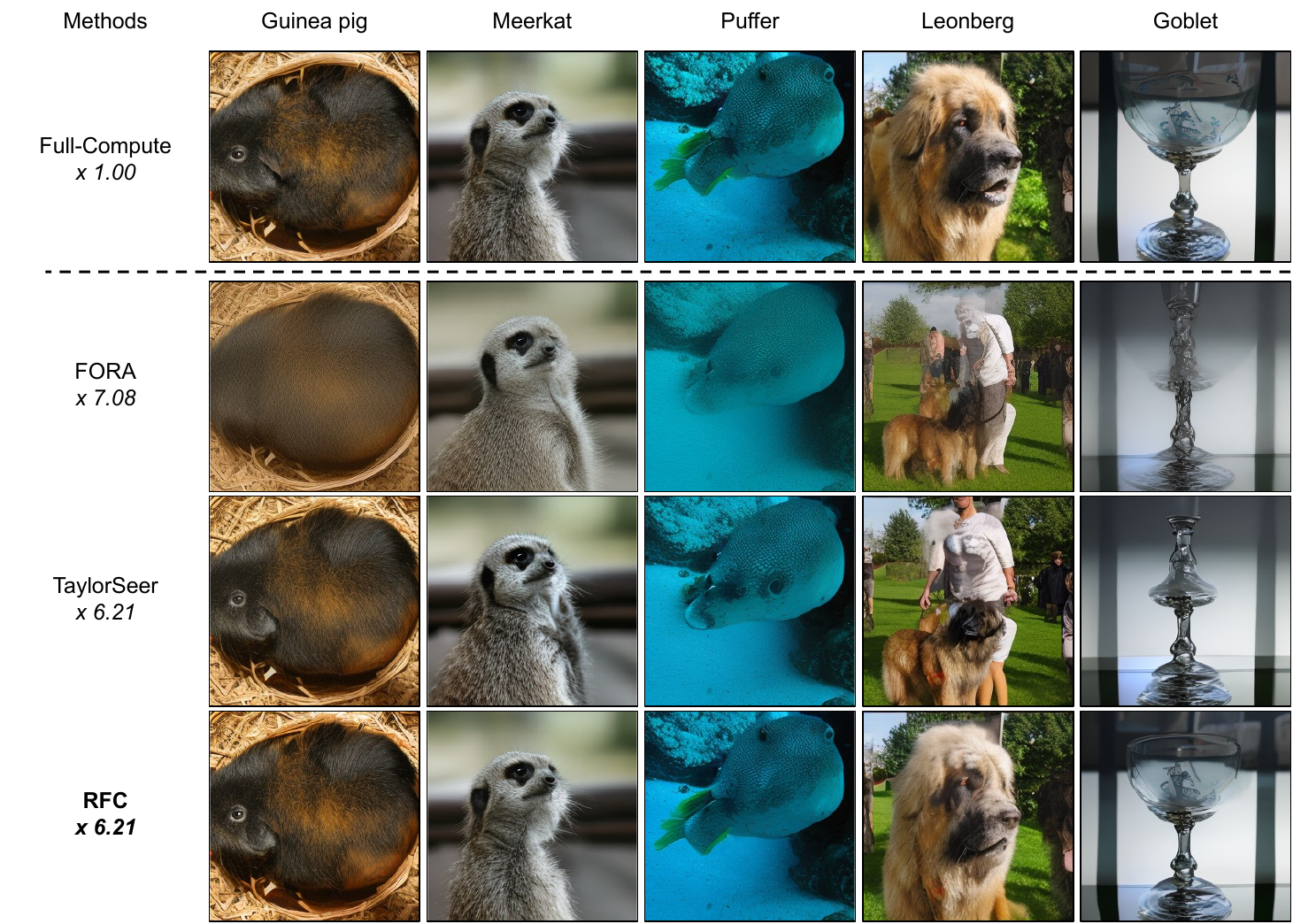}
    % \vspace{-0.7cm}
    \caption{Qualitative comparisons of class-conditional image generation for DiT-XL/2~\citep{peebles2023scalable} on ImageNet~\citep{deng2009imagenet}.}
    \label{fig:appendix_dit}
\end{figure}

\begin{figure}[t]
    \centering
    \includegraphics[width=0.95\textwidth]{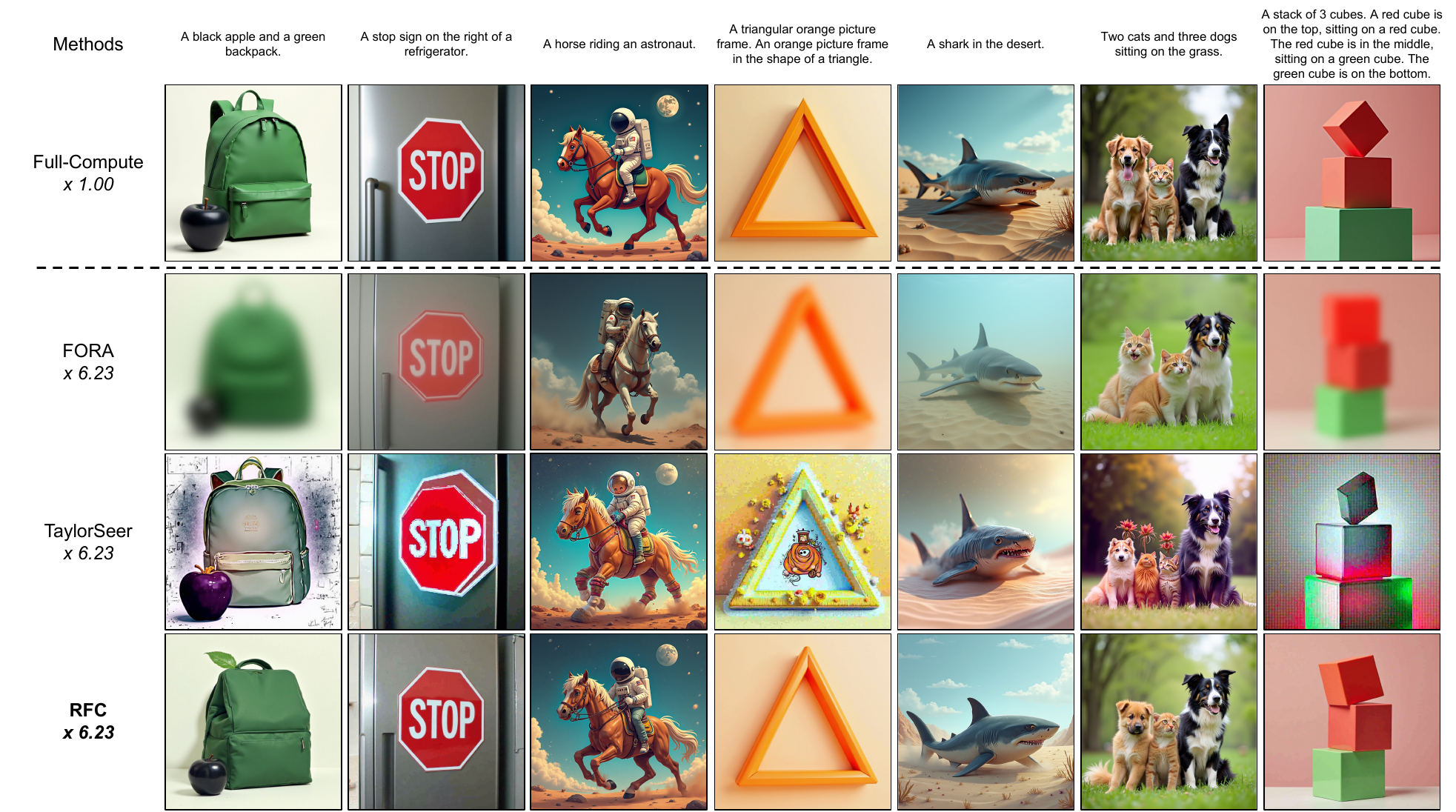} \\ 
    \caption{Qualitative comparisons of text-to-image generation for FLUX.1 dev~\citep{labs2025flux1kontextflowmatching} on DrawBench~\citep{saharia2022photorealistic}.}
    \label{fig:appendix_flux}
\end{figure}

\begin{figure}[t]
    \centering
    \includegraphics[width=\textwidth]{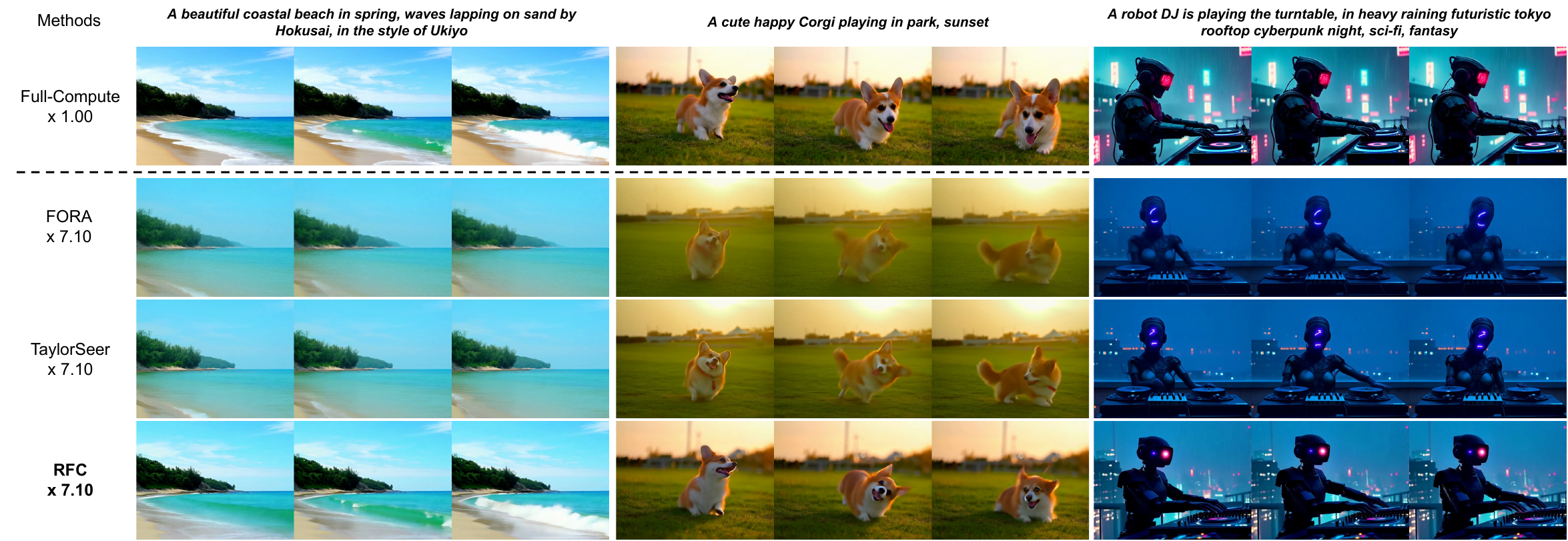} \\ 
    \caption{Qualitative comparisons of text-to-video generation for HunyuanVideo~\citep{kong2024hunyuanvideo} on VBench~\citep{huang2024vbench}.}
    \label{fig:appendix_video}
\end{figure}

We provide in Figs.~\ref{fig:appendix_dit}-\ref{fig:appendix_video} additional qualitative results for class-conditional image, text-to-image, and text-to-video generation tasks. Consistent with the observations in Sec.~\ref{sec:results}, RFC produces higher-quality images and videos compared to those of state-of-the-art approaches~\citep{selvaraju2024fora, liu2025reusing}. These results further validate the effectiveness and superiority of RFC.

\section{More Discussions}
\label{sec:app_discussion}

\begin{table}[t]
  \sisetup{
    detect-weight   = true,
    detect-family   = true,
    mode            = text,      
  }
  \scriptsize
    \centering
    \caption{Settings of $\tau$ for various models.}
    \begin{tabular}{cccc}
    \toprule
    Model & Order ($m$) & $\tau$ & NFC \\
    \midrule
    \multirow{5}{*}[-10pt]{DiT-XL/2~\citep{peebles2023scalable}} & \multirow{5}{*}[-10pt]{2} & 0.19 & 14.01 \\
    \cmidrule(lr){3-4}
    && 0.38 & 11.01 \\
    \cmidrule(lr){3-4}
    && 0.48 & 10.04 \\
    \cmidrule(lr){3-4}
    && 0.88 & 8.02 \\
    \cmidrule(lr){3-4}
    && 1.20 & 7.04 \\
    \midrule

     \multirow{4}{*}[-6pt]{FLUX.1 dev~\citep{labs2025flux1kontextflowmatching}}
        & \multirow{2}{*}[-2pt]{1}
            & 0.30 & 14.02 \\\cmidrule(lr){3-4}
        & 
            & 1.10 & ~~8.00 \\
    \cmidrule(lr){2-4}
        & \multirow{2}{*}[-2pt]{2}
            & 0.34 & 13.80 \\\cmidrule(lr){3-4}
        &
            & 1.15 & ~~8.03 \\
    \midrule
    \multirow{2}{*}[-2pt]{HunyuanVideo~\citep{kong2024hunyuanvideo}}
        & \multirow{2}{*}[-2pt]{1}
            & 0.70 & ~~8.96 \\\cmidrule(lr){3-4}
        & 
            & 1.10 & ~~7.09 \\
    \bottomrule
         
    \end{tabular}
    
    \label{tab:tau_settings}
\end{table}

\begin{figure}[t]
    \centering
    \includegraphics[width=0.5\textwidth]{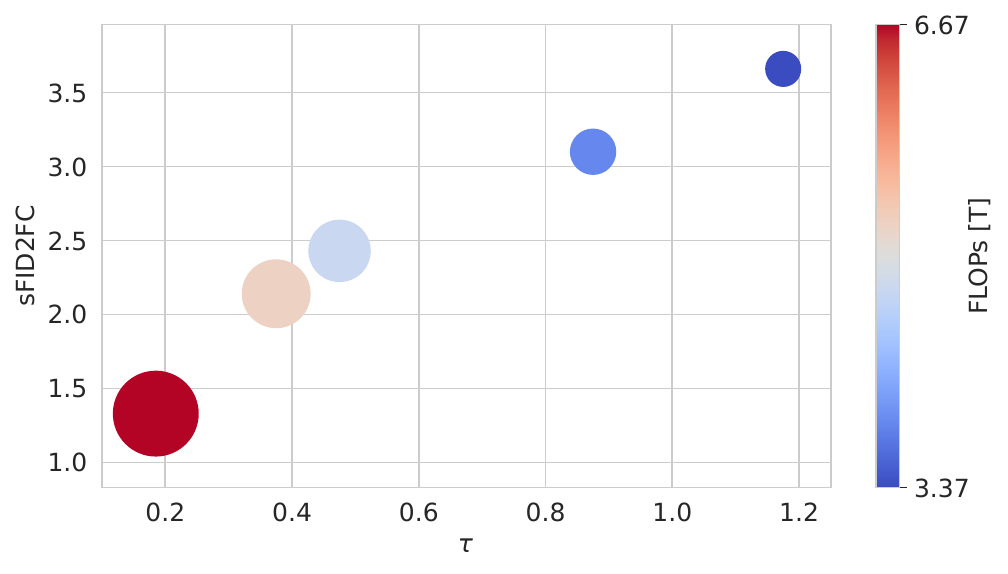}
    \caption{Trade-off between generation quality and computational cost on ImageNet~\citep{deng2009imagenet} using DiT-XL/2~\citep{peebles2023scalable} under varying values of $\tau$. The plot shows the change in performance (\ie,~sFID2FC) as $\tau$ increases, with the corresponding FLOPs indicated by the size and color of each circle.}
    \label{fig:trade_off}
\end{figure}

\paragraph{Details on choosing $\tau$.} {In our framework, $\tau$ is a key parameter in RCS that controls the trade-off between generation quality and computational efficiency. Specifically, a larger $\tau$ allows more prediction steps before triggering a full computation, thereby reducing the number of full computations and improving efficiency. This role is similar to adjusting $\delta$ in~\citep{liu2025timestep} or $N$ in~\citep{liu2025reusing, selvaraju2024fora, zou2024accelerating}, where users can tune the parameter based on their desired efficiency level. 

For fair comparison with other methods, we tune $\tau$ to match the target NFC. Specifically, we conduct a grid search over candidate $\tau$ values, generate 10 samples per setting, and measure the average NFC. We then fix the $\tau$ value that yields the desired NFC, and use this setting for reporting quantitative results. For detailed $\tau$ settings for Tables~\ref{tab:DiT}--\ref{tab:Video} are shown in Table~\ref{tab:tau_settings}. We also provide the trade-off of our approach with varying $\tau$ in Fig.~\ref{fig:trade_off}. Please note that we have observed that NFC remains highly consistent across different samples, requiring no extensive tuning. 
}

\begin{figure}[t]
    \centering
    \includegraphics[width=0.6\textwidth]{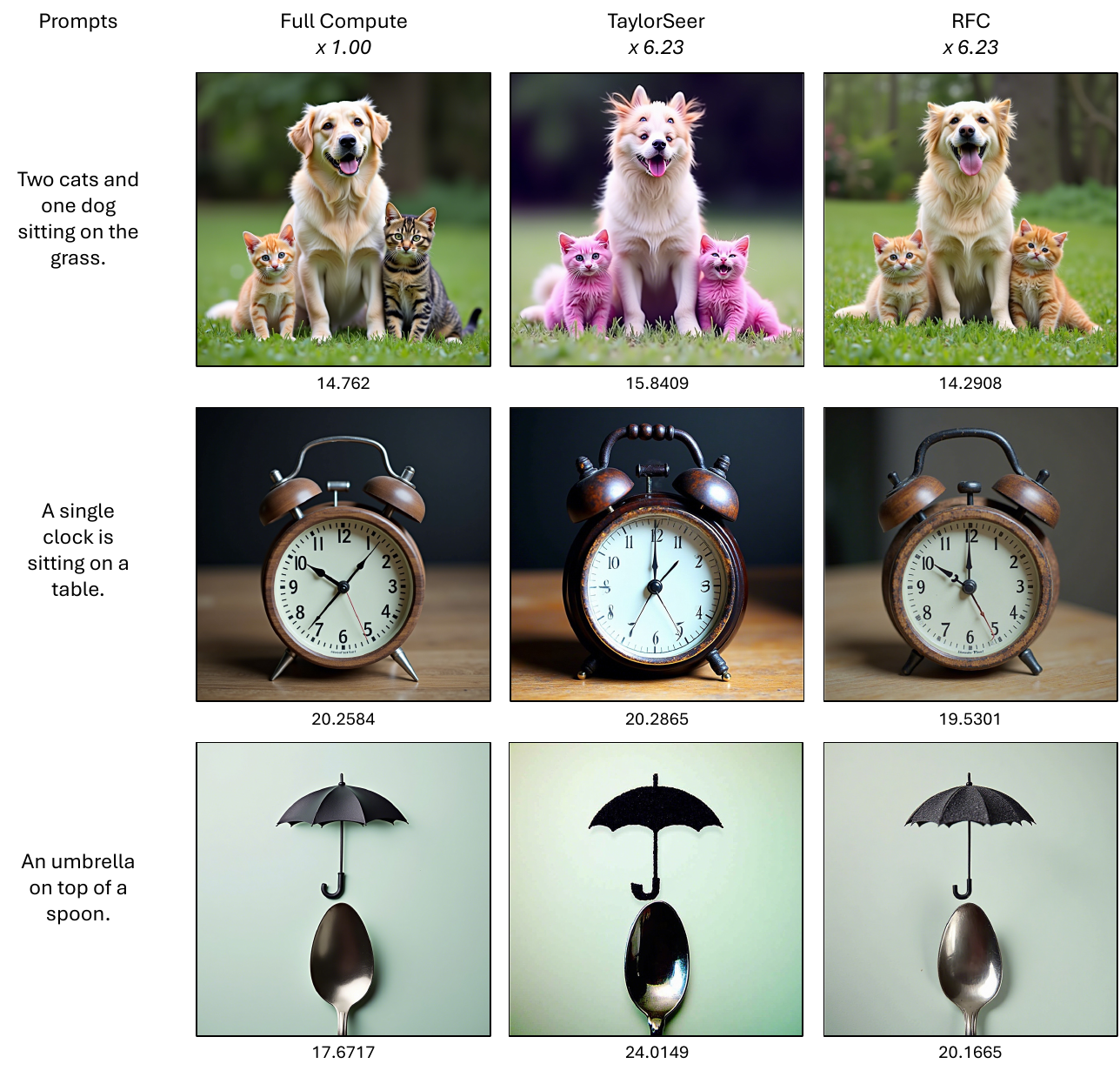}
    \caption{Clip scores of text-to-image generation for FLUX.1 dev~\citep{labs2025flux1kontextflowmatching} on DrawBench~\citep{saharia2022photorealistic}.}
    \label{fig:clip_scores}
\end{figure}

\paragraph{Reliability of CLIP score.} {RFC does not achieve the highest CLIP score in Table~\ref{tab:FLUX} due to the limitations of the CLIP score. The CLIP score mainly reflects coarse semantic alignment between the images and texts rather than detailed correctness, as demonstrated by prior work~\citep{lin2024evaluating}. Also, we can see in Table~\ref{tab:FLUX} that the CLIP score of TaylorSeer frequently outperforms fully computed counterparts, which shows its unreliability. 

To further validate this, we show in Fig.~\ref{fig:clip_scores} qualitative results where CLIP scores fail to reflect true generation quality. We can see that even when the generated images exhibit visual artifacts, they can still receive high CLIP scores when they roughly match the prompt. For instance, TaylorSeer~\citep{liu2025reusing} generates a dog with a distorted face and a cat with unrealistic colors (first row), a clock where the number “9” is inaccurately shaped (second row), and an umbrella with unnatural textures (third row), but these examples show the highest CLIP scores.

}

\begin{table}[!t]
  \sisetup{
    detect-weight   = true,
    detect-family   = true,
    mode            = text
  }
  \scriptsize

  \caption{Quantitative results for layer-wise caching strategies on ImageNet~\citep{deng2009imagenet} using DiT-XL/2~\citep{peebles2023scalable}. We reduce the frequency of full computations by half for the first (shallow skip) or last (deep skip) 7 blocks out of 28. We adjust $\tau$ to use more full computations for these variants to match FLOPs. All results are obtained using the second-order approximation ($m=2$).}
  \label{tab:layer_skip}
  \centering
  \resizebox{0.85\linewidth}{!}{
\begin{tabular}{l c c c c c c c}
\toprule
\textbf{Methods} & \textbf{NFC} & \textbf{FLOPs (T)}$\downarrow$ & \textbf{FID}$\downarrow$ & \textbf{sFID}$\downarrow$ & \textbf{FID2FC}$\downarrow$ & \textbf{sFID2FC}$\downarrow$ & \textbf{IS}$\uparrow$ \\
\midrule
Full-Compute    
& 50 & 23.74 & 2.32 & 4.32 & - & - & 241.25 \\
\midrule

Shallow Skip 
& 16 & 6.66 & 3.09 & 4.65 & 0.75 & 2.61 & 220.00 \\

Deep Skip 
& 16 & 6.66 & 3.47 & 5.15 & 1.10 & 2.82 & 211.54 \\

\rowcolor[HTML]{ECF9FF}RFC ($ m=2$) 
& 14 & 6.66 & 2.52 & 4.60 & 0.30 & 1.33 & 231.00 \\
\midrule

Shallow Skip 
& 8 & 3.35 & 11.70 & 9.13 & 9.10 & 11.71 & 147.88 \\

Deep Skip 
& 8 & 3.35 & 10.20 & 6.22 & 6.95 & ~6.59 & 149.10 \\

\rowcolor[HTML]{ECF9FF}RFC ($ m=2$) 
& 7 & 3.35 & ~~3.40 & ~~5.21 & 1.03 & ~3.66 & 215.39 \\
\bottomrule
\end{tabular}
}
\end{table}

\paragraph{Layer-wise caching strategy.} 
{We investigate whether skipping full computations for certain layers leads to a better trade-off between image quality and efficiency. Specifically, we consider two strategies: skipping full computations more often for (1) the shallow layers (\ie, the first 7 blocks), or (2) the deep layers (\ie, the last 7 blocks). In these settings, the selected layers perform fewer full-compute steps (\ie, at half the frequency) than the remaining layers during the denoising process. We can see from Table~\ref{tab:layer_skip} that both variants result in worse performance compared to our method, indicating that naive layer-wise skipping does not lead to better overall results. We believe, however, that exploring the strategies for layer-wise skipping could be a promising direction for future work.}

\begin{table*}[t]
  \centering
  \sisetup{
    detect-weight   = true,
    detect-family   = true,
    mode            = text,
  }
  \scriptsize
  \caption{Quantitative comparison of FORA~\citep{selvaraju2024fora}, TaylorSeer~\citep{liu2025reusing}, and RFC using the DDIM~\citep{song2020denoising} model trained on LSUN-bedroom~\citep{yu2015lsun} with 50 timesteps. We use the first-order approximation ($m=1$).}
  \label{tab:unet}
  \renewcommand{\arraystretch}{0.95}
  \resizebox{0.8\linewidth}{!}{
    \begin{tabular}{l c c c}
      \toprule
      \multicolumn{1}{c}{\textbf{Methods}} & \textbf{NFC} & \textbf{FID2FC}$\downarrow$ & \textbf{sFID2FC}$\downarrow$ \\
      \midrule
      FORA~\citep{selvaraju2024fora} (N=5)          & 11 & 78.27  & 69.84 \\
      TaylorSeer~\citep{liu2025reusing} (N=5)    & 11 & 18.83  & 11.83 \\
      \rowcolor[HTML]{ECF9FF}RFC & 11 &  \textbf{8.55}  &  \textbf{7.59} \\
      \midrule
      FORA~\citep{selvaraju2024fora} (N=6)          & 10 & 103.48 & 89.17 \\
      TaylorSeer~\citep{liu2025reusing} (N=6)    & 10 & 37.81  & 21.35 \\
      \rowcolor[HTML]{ECF9FF}RFC &  10 & \textbf{11.64}  &  \textbf{8.25} \\
      \bottomrule
    \end{tabular}
  }
\end{table*}

\paragraph{Adaptability to U-Net architecture~\citep{ronneberger2015u}.} 
{Although we mainly focus on DiTs, RFC can also be applied to U-Net architectures. To validate its effectiveness, we compare in Table~\ref{tab:unet} RFC with prior methods~(FORA~\citep{selvaraju2024fora} and TaylorSeer~\citep{liu2025reusing}) using the DDIM~\citep{song2020denoising} model trained on LSUN-bedroom~\citep{yu2015lsun}. We can see that RFC significantly outperforms others, demonstrating its generality and architectural adaptability.
}

\begin{figure}[t]
   \centering
   \captionsetup{font={small}}
   % (a) Base Heatmap
   % \hfill
   \begin{subfigure}[t]{0.3\linewidth}
       \centering
       \includegraphics[width=\textwidth]{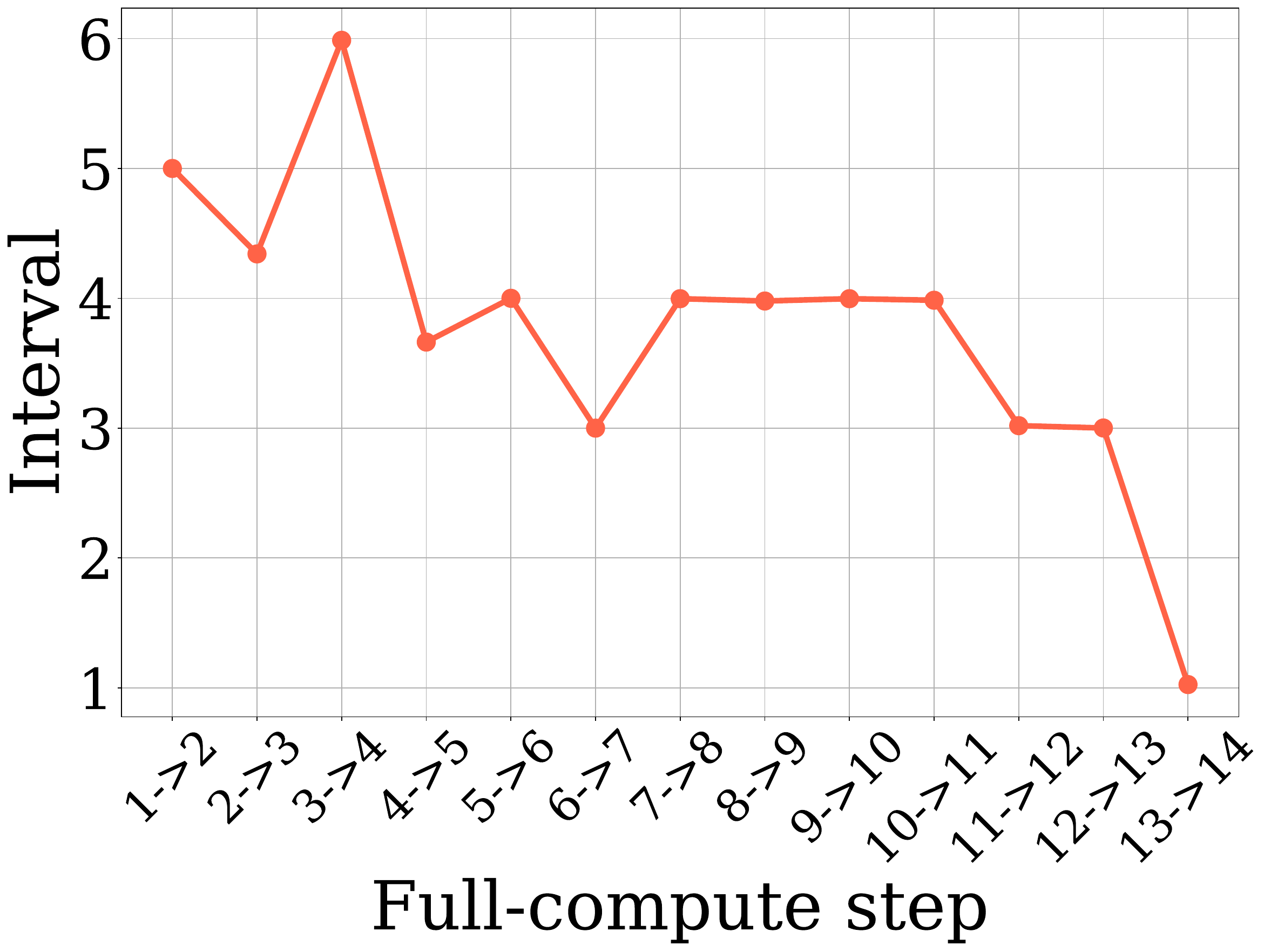}
       \caption{14 NFC.}
   \end{subfigure}
   % \hfill
   \hspace{0.06\linewidth} % 적당한 간격
   \begin{subfigure}[t]{0.3\linewidth}
       \centering
       \includegraphics[width=\textwidth]{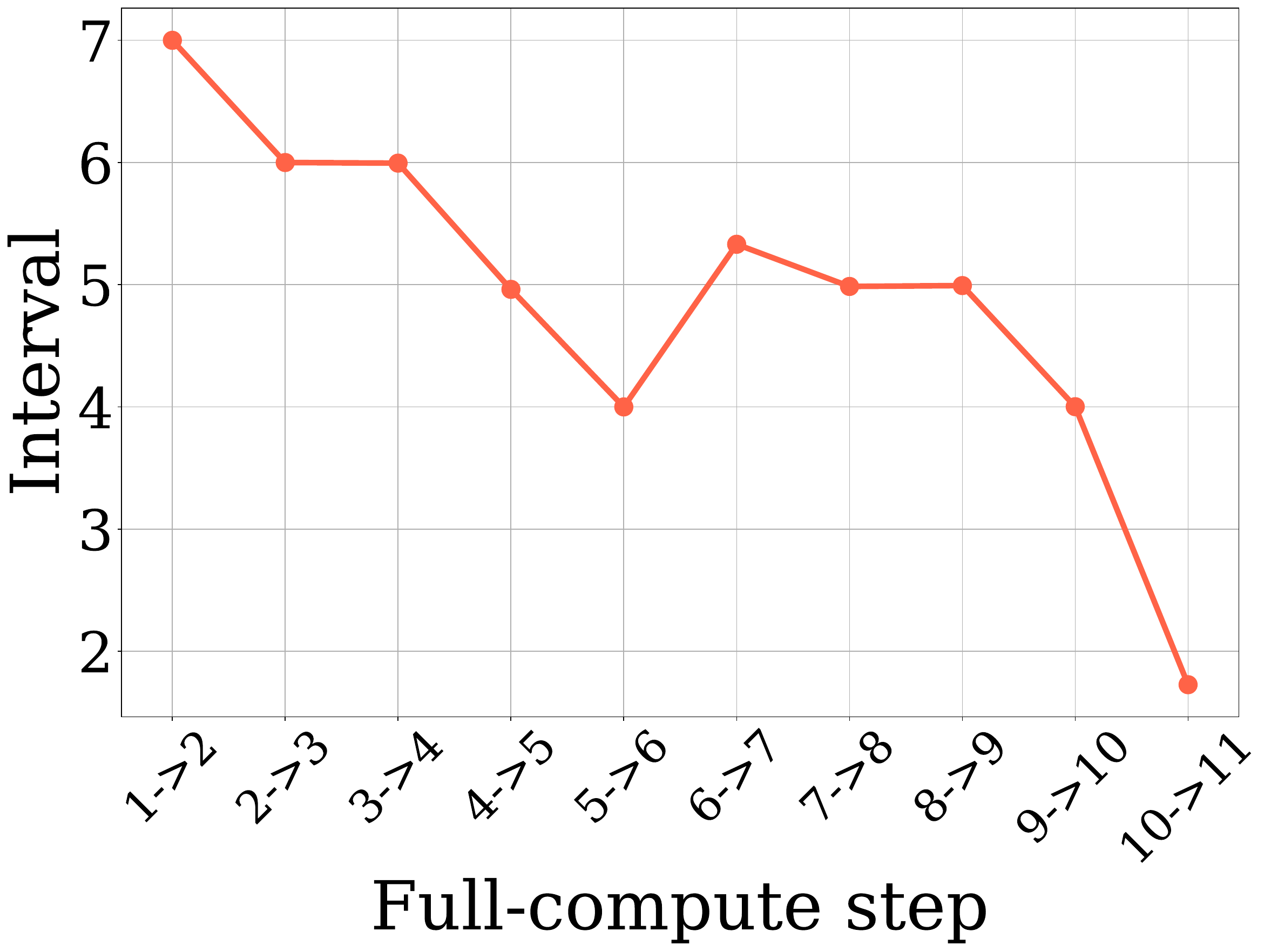}
       \caption{11 NFC.}
   \end{subfigure}
   % \hfill

   \vspace{-0.2cm}
   \caption{
Analysis of non-uniform intervals scheduled by RCS using DiT-XL/2~\citep{peebles2023scalable} on ImageNet~\citep{deng2009imagenet}. We visualize the average interval between full-compute steps for (a) 14 and (b) 11 NFC settings. $i \to i+1$ at the x-axis denotes the $i$-th and $(i+1)$-th full computation, and the y-axis shows the average interval. The results are averaged across 1,000 samples.
}

   % \vspace{-0.4cm}

   \label{fig:appendix_non_uniform_interval}
\end{figure}

\paragraph{Non-uniform intervals.} {Our RCS dynamically determines when to perform full computations by monitoring the accumulated error in feature prediction. To investigate how this error varies across the denoising process, we show in Fig.~\ref{fig:appendix_non_uniform_interval} the average interval between two successive full-compute steps across 1,000 samples. We observe that the average interval between early full computations (i.e., close to noises) tends to be large, and the interval progressively shortens for the later timesteps (i.e., close to clean images). This indicates that prediction error accumulates more slowly at the beginning of the denoising process.

This phenomenon aligns with the generative process, where early stages form coarse, low-frequency structures with more stable feature dynamics, while later stages refine fine-grained, high-frequency details, involving more dynamic feature changes. RCS naturally adapts to this behavior by performing full computations less frequently during the early timesteps, when feature prediction is easier, and increasing the frequency in the later timesteps as prediction becomes more challenging, thereby optimizing the efficiency-quality trade-off.}

\section{Limitation}
\label{sec:limitation}

\begin{table}[t]
 \sisetup{
    detect-weight   = true,
    detect-family   = true,
    mode            = text
  }
  \scriptsize
    \centering
    \caption{Quantitative results of DiT-XL/2~\citep{peebles2023scalable} on ImageNet~\citep{deng2009imagenet}, in terms of NFC, order in the Taylor expansion, GPU peak memory, FID2FC, and sFID2FC.}
    \resizebox{0.75\linewidth}{!}{
    \begin{tabular}{l c c c c c}
    \toprule
    \textbf{Methods} & \textbf{NFC} & \textbf{Order} & \textbf{Peak Memory (MB)} & \textbf{FID2FC}$\downarrow$ & \textbf{sFID2FC}$\downarrow$\\
    \midrule
     FORA~\citep{selvaraju2024fora}    &  10 & -- & 3121 & 5.32 & 12.87 \\
     TaylorSeer~\citep{liu2025reusing}    &  10 & 1 & 3207 & 1.42 & 6.17 \\
     \rowcolor[HTML]{ECF9FF}RFC &  10.04 & 1 & 3335 & 0.58 & 2.79 \\
     \midrule
     TaylorSeer~\citep{liu2025reusing}    &  10 & 2 & 3333 & 1.05  & 4.86 \\
     \rowcolor[HTML]{ECF9FF}RFC &  10.02 & 2 & 3463 & 0.54 & 2.43 \\
     \bottomrule
    \end{tabular}
    }
    \label{tab:appendix_limitations}
\end{table}

\begin{table}[!ht]
  \sisetup{
    detect-weight = true,
    detect-family = true,
    mode          = text
  }
  \centering
  \scriptsize
  \caption{Time overhead of RFC components compared to the baseline~(TaylorSeer~\citep{liu2025reusing}) using DiT-XL/2~\citep{peebles2023scalable} on ImageNet~\citep{deng2009imagenet} with 14 NFCs. Percentages in parentheses indicate the time increase relative to the baseline.}
  \label{tab:time_overhead}
  \resizebox{0.65\linewidth}{!}{ % 너비 조절
    \begin{tabular}{l l}
    \toprule
    \textbf{Methods} & \textbf{Time (Overhead \%)} \\
    \midrule
    Baseline (TaylorSeer~\citep{liu2025reusing}) & 2.840s \\
    \midrule
    Input feature computation & + 0.019s (0.67\%) \\
    Input feature prediction  & + 0.002s (0.07\%) \\
    \bottomrule
    \end{tabular}
  }
\end{table}

RFC effectively improves the generation quality but incurs additional memory costs to store input features, similar to TaylorSeer~\citep{liu2025reusing}, which stores more output features for higher-order approximations. Notably, as shown in Table~\ref{tab:appendix_limitations}, RFC generally achieves greater improvements in the generation quality than increasing the approximation order by one in TaylorSeer, while requiring a comparable amount of memory, highlighting the effectiveness of our approach.

{As shown in Table~\ref{tab:time_overhead}, RFC incurs a slight time overhead compared to the baseline. Compared to the baseline, RFC needs (1) input feature computation for RFE and RCS, which requires lightweight operations~(\eg, LayerNorm, scaling and shifting); and (2) input feature prediction using the Taylor expansion for RCS. To reduce the cost of input feature prediction in RCS, we have demonstrated in Table~\ref{tab:schedule} that computing the prediction error from only the first module, rather than all modules~(\eg, 56 modules in DiT-XL/2), is sufficient. {For instance, under the 11 NFC setting, computing prediction error across all modules increases runtime by approximately 7\%, while yielding only marginal improvement in performance (0.62 vs. 0.61 in terms of FID2FC). We conjecture that this is because errors in early modules affect subsequent modules, suggesting that the prediction error of the first module can be a reliable indicator of the overall prediction error.}  To further reduce the cost of input feature computation, a promising direction for future work could be to explore a hybrid strategy that applies RFE only to a subset of important modules, while using the baseline method for the remaining ones.}

There are some failure cases in RFC, where the generated image does not align well with the prompt. This typically occurs when full computations fail to capture the prompt correctly, for example, as shown in the third row of Fig.~\ref{fig:appendix_flux}, where an astronaut is riding a horse, instead of a horse riding an astronaut. This is because RFC accurately approximates the full computations, it also reproduces such errors.

\section{LLM Usage}
\label{sec:llm}

We used a large language model (LLM) solely for proofreading and polishing the manuscript. The model was not involved in generating research ideas, designing methods, conducting experiments, or interpreting results.

%% file: iclr2026_conference.bib
@String(CVPR  = {CVPR})

@String(ICCV  = {ICCV})

@String(ECCV  = {ECCV})

@String(NeurIPS  = {NeurIPS})

@String(TIP   = {IEEE TIP})

@String(ICLR  = {ICLR})

@String(ICML = {ICML})

@String{EMNLP = {EMNLP}}

@inproceedings{dhariwal2021diffusion,
  title={Diffusion models beat {GAN}s on image synthesis},
  author={Dhariwal, Prafulla and Nichol, Alexander},
  booktitle=NeurIPS,
  year={2021}
}

@inproceedings{lu2022dpm,
  title={{DPM-Solver}: A fast {ODE} solver for diffusion probabilistic model sampling in around 10 steps},
  author={Lu, Cheng and Zhou, Yuhao and Bao, Fan and Chen, Jianfei and Li, Chongxuan and Zhu, Jun},
  booktitle=NeurIPS,
  year={2022}
}

@inproceedings{heusel2017gans,
  title={Gans trained by a two time-scale update rule converge to a local nash equilibrium},
  author={Heusel, Martin and Ramsauer, Hubert and Unterthiner, Thomas and Nessler, Bernhard and Hochreiter, Sepp},
  booktitle=NeurIPS,
  year={2017}
}

@inproceedings{salimans2016improved,
  title={Improved techniques for training {GAN}s},
  author={Salimans, Tim and Goodfellow, Ian and Zaremba, Wojciech and Cheung, Vicki and Radford, Alec and Chen, Xi},
  booktitle=NeurIPS,
  year={2016}
}

@inproceedings{song2023consistency,
  title={Consistency models},
  author={Song, Yang and Dhariwal, Prafulla and Chen, Mark and Sutskever, Ilya},
    booktitle=ICML,
  year={2023}
}

@inproceedings{salimans2022progressive,
  title={Progressive distillation for fast sampling of diffusion models},
  author={Salimans, Tim and Ho, Jonathan},
  booktitle=ICLR,
  year={2022}
}

@inproceedings{song2019generative,
  title={Generative modeling by estimating gradients of the data distribution},
  author={Song, Yang and Ermon, Stefano},
  booktitle=NeurIPS,
  year={2019}
}

@inproceedings{saharia2022photorealistic,
  title={Photorealistic text-to-image diffusion models with deep language understanding},
  author={Saharia, Chitwan and Chan, William and Saxena, Saurabh and Li, Lala and Whang, Jay and Denton, Emily L and Ghasemipour, Kamyar and Gontijo Lopes, Raphael and Karagol Ayan, Burcu and Salimans, Tim and others},
  booktitle=NeurIPS,
  year={2022}
}

@inproceedings{ronneberger2015u,
  title={U-{N}et: Convolutional networks for biomedical image segmentation},
  author={Ronneberger, Olaf and Fischer, Philipp and Brox, Thomas},
  booktitle={MICCAI},
  year={2015}
}

@inproceedings{deng2009imagenet,
  author={Deng, Jia and Dong, Wei and Socher, Richard and Li, Li-Jia and Kai Li and Li Fei-Fei},
  booktitle=CVPR, 
  title={{I}mage{N}et: A large-scale hierarchical image database}, 
  year={2009},
  }

@inproceedings{hessel2022clipscore,
  author={Hessel, Jack and Holtzman, Ari and Forbes, Maxwell and Le Bras, Ronan and Choi, Yejin},
  title={{CLIPS}core: A Reference-free Evaluation Metric for Image Captioning},
  year={2021},
  booktitle=EMNLP,
}

@inproceedings{nash2021generating,
  title={Generating images with sparse representations},
  author={Nash, Charlie and Menick, Jacob and Dieleman, Sander and Battaglia, Peter W},
  booktitle=ICML,
  year={2021}
}

@InProceedings{zhang2018unreasonable,
  author = {Zhang, Richard and Isola, Phillip and Efros, Alexei A. and Shechtman, Eli and Wang, Oliver},
  title = {The Unreasonable Effectiveness of Deep Features as a Perceptual Metric},
  booktitle = CVPR,
  year = {2018}
}

@inproceedings{ho2020denoising,
  title={Denoising diffusion probabilistic models},
  author={Ho, Jonathan and Jain, Ajay and Abbeel, Pieter},
  booktitle=NeurIPS,
  year={2020}
}

@inproceedings{song2020denoising,
  title={Denoising diffusion implicit models},
  author={Song, Jiaming and Meng, Chenlin and Ermon, Stefano},
  booktitle=ICLR,
  year={2020}
}

@inproceedings{rombach2022high,
  title={High-resolution image synthesis with latent diffusion models},
  author={Rombach, Robin and Blattmann, Andreas and Lorenz, Dominik and Esser, Patrick and Ommer, Bj{\"o}rn},
  booktitle=CVPR,
  year={2022}
}

@inproceedings{peebles2023scalable,
  title={Scalable diffusion models with transformers},
  author={Peebles, William and Xie, Saining},
  booktitle=ICCV,
  year={2023}
}

@article{blattmann2023stable,
  title={Stable video diffusion: Scaling latent video diffusion models to large datasets},
  author={Blattmann, Andreas and Dockhorn, Tim and Kulal, Sumith and Mendelevitch, Daniel and Kilian, Maciej and Lorenz, Dominik and Levi, Yam and English, Zion and Voleti, Vikram and Letts, Adam and others},
  journal={arXiv preprint},
  year={2023}
}

@inproceedings{blattmann2023align,
  title={Align your latents: High-resolution video synthesis with latent diffusion models},
  author={Blattmann, Andreas and Rombach, Robin and Ling, Huan and Dockhorn, Tim and Kim, Seung Wook and Fidler, Sanja and Kreis, Karsten},
  booktitle=CVPR,
  year={2023}
}

@inproceedings{so2312frdiff,
  title={Frdiff: Feature reuse for universal training-free acceleration of diffusion models, 2024b},
  author={So, Junhyuk and Lee, Jungwon and Park, Eunhyeok},
  booktitle=ECCV,
  year={2024}
}

@inproceedings{ma2024deepcache,
  title={Deep{C}ache: Accelerating diffusion models for free},
  author={Ma, Xinyin and Fang, Gongfan and Wang, Xinchao},
  booktitle=CVPR,
  year={2024}
}

@inproceedings{wimbauer2024cache,
  title={Cache me if you can: Accelerating diffusion models through block caching},
  author={Wimbauer, Felix and Wu, Bichen and Schoenfeld, Edgar and Dai, Xiaoliang and Hou, Ji and He, Zijian and Sanakoyeu, Artsiom and Zhang, Peizhao and Tsai, Sam and Kohler, Jonas and others},
  booktitle=CVPR,
  year={2024}
}

@inproceedings{liu2025reusing,
  title={From reusing to forecasting: Accelerating diffusion models with taylorseers},
  author={Liu, Jiacheng and Zou, Chang and Lyu, Yuanhuiyi and Chen, Junjie and Zhang, Linfeng},
  booktitle=ICCV,
  year={2025}
}

@article{selvaraju2024fora,
  title={{FORA}: Fast-forward caching in diffusion transformer acceleration},
  author={Selvaraju, Pratheba and Ding, Tianyu and Chen, Tianyi and Zharkov, Ilya and Liang, Luming},
  journal={arXiv preprint},
  year={2024}
}

@article{zou2024accelerating,
  title={Accelerating diffusion transformers with dual feature caching},
  author={Zou, Chang and Zhang, Evelyn and Guo, Runlin and Xu, Haohang and He, Conghui and Hu, Xuming and Zhang, Linfeng},
  journal={arXiv preprint},
  year={2024}
}

@inproceedings{zou2025accelerating,
  title={Accelerating diffusion transformers with token-wise feature caching},
  author={Zou, Chang and Liu, Xuyang and Liu, Ting and Huang, Siteng and Zhang, Linfeng},
  booktitle=ICLR,
  year={2025}
}

@inproceedings{liu2025timestep,
  title={Timestep Embedding Tells: It's Time to Cache for Video Diffusion Model},
  author={Liu, Feng and Zhang, Shiwei and Wang, Xiaofeng and Wei, Yujie and Qiu, Haonan and Zhao, Yuzhong and Zhang, Yingya and Ye, Qixiang and Wan, Fang},
  booktitle=CVPR,
  year={2025}
}

@inproceedings{lv2024fastercache,
  title={Faster{C}ache: Training-free video diffusion model acceleration with high quality},
  author={Lv, Zhengyao and Si, Chenyang and Song, Junhao and Yang, Zhenyu and Qiao, Yu and Liu, Ziwei and Wong, Kwan-Yee K},
  booktitle=ICLR,
  year={2025}
}

@inproceedings{qiu2025accelerating,
  title={Accelerating diffusion transformer via gradient-optimized cache},
  author={Qiu, Junxiang and Liu, Lin and Wang, Shuo and Lu, Jinda and Chen, Kezhou and Hao, Yanbin},
  booktitle=ICCV,
  year={2025}
}

@article{ba2016layer,
  title={Layer normalization},
  author={Ba, Jimmy Lei and Kiros, Jamie Ryan and Hinton, Geoffrey E},
  journal={arXiv preprint},
  year={2016}
}

@article{labs2025flux1kontextflowmatching,
  title={FLUX.1 Kontext: Flow Matching for In-Context Image Generation and Editing in Latent Space},
  author={Batifol, Stephen and Blattmann, Andreas and Boesel, Frederic and Consul, Saksham and Diagne, Cyril and Dockhorn, Tim and English, Jack and English, Zion and Esser, Patrick and Kulal, Sumith and others},
  journal={arXiv preprint},
  year={2025}
}

@inproceedings{huang2024vbench,
  title={Vbench: Comprehensive benchmark suite for video generative models},
  author={Huang, Ziqi and He, Yinan and Yu, Jiashuo and Zhang, Fan and Si, Chenyang and Jiang, Yuming and Zhang, Yuanhan and Wu, Tianxing and Jin, Qingyang and Chanpaisit, Nattapol and others},
  booktitle=CVPR,
  year={2024}
}

@inproceedings{liu2022flow,
  title={Flow straight and fast: Learning to generate and transfer data with rectified flow},
  author={Liu, Xingchao and Gong, Chengyue and Liu, Qiang},
  booktitle=ICLR,
  year={2023}
}

@article{kong2024hunyuanvideo,
  title={HunyuanVideo: A systematic framework for large video generative models},
  author={Kong, Weijie and Tian, Qi and Zhang, Zijian and Min, Rox and Dai, Zuozhuo and Zhou, Jin and Xiong, Jiangfeng and Li, Xin and Wu, Bo and Zhang, Jianwei and others},
  journal={arXiv preprint},
  year={2024}
}

@article{yu2015lsun,
  title={Lsun: Construction of a large-scale image dataset using deep learning with humans in the loop},
  author={Yu, Fisher and Seff, Ari and Zhang, Yinda and Song, Shuran and Funkhouser, Thomas and Xiao, Jianxiong},
  journal={arXiv preprint},
  year={2015}
}

@inproceedings{xu2023imagereward,
  title={ImageReward: learning and evaluating human preferences for text-to-image generation},
  author={Xu, Jiazheng and Liu, Xiao and Wu, Yuchen and Tong, Yuxuan and Li, Qinkai and Ding, Ming and Tang, Jie and Dong, Yuxiao},
  booktitle=NeurIPS,
  year={2023}
}

@article{wang2004image,
  title={Image quality assessment: from error visibility to structural similarity},
  author={Wang, Zhou and Bovik, Alan C and Sheikh, Hamid R and Simoncelli, Eero P},
  journal={IEEE TIP},
  year={2004},
  publisher={IEEE}
}

@inproceedings{lin2024evaluating,
  title={Evaluating text-to-visual generation with image-to-text generation},
  author={Lin, Zhiqiu and Pathak, Deepak and Li, Baiqi and Li, Jiayao and Xia, Xide and Neubig, Graham and Zhang, Pengchuan and Ramanan, Deva},
  booktitle=ECCV,
  year={2024}
}
